%% file: main.tex
\documentclass{article}
\usepackage{iclr2027_conference,times}
\usepackage[utf8]{inputenc}
\usepackage[T1]{fontenc}
\usepackage{hyperref}
\usepackage{url}
\usepackage{booktabs}
\usepackage{multirow}
\usepackage{amsmath,amssymb,amsfonts,amsthm,mathtools}
\usepackage{graphicx}
\usepackage{algorithm}
\usepackage{algpseudocode}
\usepackage{aliascnt}
\usepackage[capitalize,noabbrev]{cleveref}
\usepackage{xcolor}

\newcommand{\revision}[1]{#1}

\usepackage{microtype}
\usepackage{anyfontsize}
\usepackage{nicefrac}
\usepackage{longtable}
\usepackage{array}
\usepackage{float}
\usepackage{wrapfig}
\usepackage{capt-of}
\usepackage{needspace}
\usepackage{placeins}

\DeclareRobustCommand{\zs}[1]{}
\newcommand{\lhh}[1]{}
\hypersetup{colorlinks=true, citecolor=blue, linkcolor=blue, urlcolor=blue, hypertexnames=false, pdftitle={When Lower Reconstruction Loss Hurts: Distributionally Robust Refinement for Low-Bit LLM Quantization}, pdfauthor={Yanlong Zhao, Xiaoyuan Cheng, Huihang Liu, Baihua He, Xinyu Zhang, Harrison Bo Hua Zhu, Wenlong Chen, Li Zeng, Zhuo Sun}, pdfsubject={Distributionally robust refinement for weight-only post-training quantization.}}

\definecolor{mockblue}{RGB}{23,107,145}
\definecolor{mockorange}{RGB}{214,139,70}
\definecolor{mockpale}{RGB}{249,243,232}
\definecolor{ProtocolOrange}{HTML}{CC6600}

\DeclareRobustCommand{\langrevision}[1]{{#1}}
\definecolor{SettingRevisionPurple}{HTML}{7A1FA2}
\DeclareRobustCommand{\settingrevision}[1]{{#1}}
\input{tex/commands.tex}

\input{tex/main/00_title_and_authors}

\iclrfinalcopy
\begin{document}
\maketitle
\lhead{Preprint}

\raggedbottom
\setlength{\textfloatsep}{10pt plus 2pt minus 2pt}
\setlength{\floatsep}{8pt plus 2pt minus 2pt}
\setlength{\intextsep}{8pt plus 2pt minus 2pt}
\setlength{\abovecaptionskip}{5pt}
\setlength{\belowcaptionskip}{1pt}

\input{tex/main/00_abstract}

\input{tex/main/01_introduction}
\input{tex/main/03_preliminaries}
\input{tex/main/05_marq}
\input{tex/main/06_experiments}
\input{tex/main/02_related_work}
\input{tex/main/07_conclusion}
\FloatBarrier

\clearpage
\bibliographystyle{iclr2027_conference}
\bibliography{references,marq_references}

\clearpage
\appendix
\input{tex/appendix/00_appendix}

\end{document}

%% file: tex/commands.tex
\newcommand{\R}{\mathbb{R}}
\newcommand{\Tr}{\operatorname{tr}}

\newcommand{\cQ}{\mathcal{Q}}
\newcommand{\cS}{\mathcal{S}}
\newcommand{\cL}{\mathcal{L}}
\newcommand{\cJ}{\mathcal{J}}
\newcommand{\bB}{\mathbf{B}}
\newcommand{\bC}{\mathbf{C}}
\newcommand{\bH}{\mathbf{H}}
\newcommand{\bI}{\mathbf{I}}
\newcommand{\bW}{\mathbf{W}}
\newcommand{\bX}{\mathbf{X}}
\newcommand{\What}{\widehat{\mathbf{W}}}
\newcommand{\norm}[1]{\left\lVert#1\right\rVert}

\DeclareMathOperator*{\argmax}{arg\,max}
\DeclareMathOperator*{\argmin}{arg\,min}

\theoremstyle{plain}
\newtheorem{theorem}{Theorem}[section]
\newaliascnt{proposition}{theorem}
\newtheorem{proposition}[proposition]{Proposition}
\aliascntresetthe{proposition}
\newaliascnt{corollary}{theorem}
\newtheorem{corollary}[corollary]{Corollary}
\aliascntresetthe{corollary}
\newaliascnt{lemma}{theorem}

\aliascntresetthe{lemma}
\theoremstyle{definition}
\newaliascnt{definition}{theorem}

\aliascntresetthe{definition}
\theoremstyle{remark}
\newaliascnt{remark}{theorem}

\aliascntresetthe{remark}
\crefname{proposition}{Proposition}{Propositions}
\Crefname{proposition}{Proposition}{Propositions}
\crefname{corollary}{Corollary}{Corollaries}
\Crefname{corollary}{Corollary}{Corollaries}
\crefname{definition}{Definition}{Definitions}
\Crefname{definition}{Definition}{Definitions}

%% file: tex/main/00_title_and_authors.tex
\title{When Lower Reconstruction Loss Hurts:\\Distributionally Robust Refinement for Low-Bit LLM Quantization}
\author{%
\textbf{Yanlong Zhao}$^{1}$\thanks{Equal Contribution: \href{mailto:zhaoyanlong@mail.ustc.edu.cn}{Yanlong Zhao}, \href{mailto:ucesxc4@ucl.ac.uk}{Xiaoyuan Cheng} and \href{mailto:liuhuihang@mail.sufe.edu.cn}{Huihang Liu}.} \quad
\textbf{Xiaoyuan Cheng}$^{2}$\footnotemark[1] \quad
\textbf{Huihang Liu}$^{3}$\footnotemark[1] \quad
\textbf{Baihua He}$^{1}$ \quad
\textbf{Xinyu Zhang}$^{1,4}$ \\
\textbf{Harrison Bo Hua Zhu}$^{5,6,7}$ \quad
\textbf{Wenlong Chen}$^{6}$ \quad
\textbf{Li Zeng}$^{8}$ \quad
\textbf{Zhuo Sun}$^{3,6}$\thanks{Correspondence Author. Correspondence to \href{mailto:zhuosunreid@outlook.com}{Zhuo Sun}.}\\[0.2cm]
$^{1}$University of Science and Technology of China,
$^{2}$University College London,\\
$^{3}$Shanghai University of Finance and Economics,
$^{4}$AMSS, Chinese Academy of Sciences,\\
$^{5}$University of Copenhagen,
$^{6}$Imperial College London,\\
$^{7}$Technical University of Denmark,
$^{8}$Shenzhen Research Institute of Big Data
}

%% file: tex/main/00_abstract.tex
\begin{abstract}
Weight-only post-training quantization (PTQ) relies heavily on reconstruction loss minimization to preserve model quality at low precision. We show that the weights favored by minimizing this loss need not yield better model performance on new tasks. In fact, we find that lower reconstruction loss can even degrade model performance on the same calibration data. Our analysis further shows that weights with lower reconstruction loss on calibration data can have higher loss than other weights when the distribution of input activations changes. Motivated by these observations and our analysis, we propose Distributionally Robust Quantization (DRQ), a post-hoc refinement process that minimizes worst-case reconstruction loss over a constrained set of input activation distributions. DRQ refines the integer codes representing quantized weights within the existing quantization grid, keeping quantization parameters and inference operators unchanged. Extensive experiments show that DRQ improves models quantized by six representative PTQ methods, including AWQ, GPTQ, and ParoQuant, and delivers gains across both dense and mixture-of-experts large language models. These results establish DRQ as a general post-hoc refinement framework for weight-only PTQ, achieving better downstream performance without adding inference overhead.
\end{abstract}

%% file: tex/main/01_introduction.tex
\section{Introduction}
\label{sec:introduction}

Large language models (LLMs) have demonstrated strong capabilities in language understanding, reasoning, and code generation \citep{gpt4,llama3,humaneval,deepseekai2026deepseekv41flash}. Their increasing memory footprint and bandwidth demands, however, make deployment costly on resource-constrained devices. Weight-only post-training quantization (PTQ) addresses this problem by compressing pretrained weights into low-bit formats such as INT4 while retaining the original activation precision. It reduces weight storage and memory traffic to accelerate memory-bound autoregressive decoding \citep{frantar2023gptq,lin2024awq,Ashkboos2024quarot}.

Most weight-only PTQ methods rely on minimizing reconstruction loss on calibration data to preserve model quality \citep{frantar2023gptq,lin2024awq,omniquant,tseng2024qtip,li2025gptqv2,sun2025flatquant,liang2026paroquant}. For a linear layer, this means matching the output of the quantized weights to that of the full-precision weights on the calibration data,
\begin{equation}
\min_{\What\in\cQ}\;\cL(\What),
\qquad
\cL(\What)=\frac{1}{n}\norm{\What\bX-\bW\bX}_{\rm F}^{2},
\label{eq:intro-reconstruction}
\end{equation}
where $\bW$ and $\What$ denote the full-precision and quantized weights, respectively, the quantization grid $\cQ$ denotes the set of all possible quantized weights, and $\bX$ denotes the associated input activations of the calibration data. Methods differ in how they minimize this loss. For example, GPTQ compensates for rounding errors \citep{frantar2023gptq}, AWQ searches for scaling and clipping parameters \citep{lin2024awq}, and ParoQuant learns transformations \citep{liang2026paroquant}.

We show that reducing reconstruction loss on calibration data can degrade the performance of the quantized model. We first quantize Llama-3.2-1B-Instruct with 3-bit AWQ using WikiText-2 (WT2) \citep{wikitext} as the calibration dataset. Starting from the weights quantized by AWQ, we further optimize one linear layer at a time to reduce reconstruction loss on the calibration data. Specifically, we search over the integer codes representing its weights, while keeping the scales, zero points, and all other layers' weights fixed. We repeat this procedure at 16 layers of the model using two different calibration sets sampled from WT2, producing 32 modified weight matrices. In 18 of 32 cases, the modified weights yield lower reconstruction loss but higher model perplexity (PPL) on the same calibration data (\cref{fig:reconstruction-trap}a). This mismatch also extends to held-out data. In a separate experiment, we examine all 112 linear layers of Llama-3.2-1B-Instruct quantized with 3-bit AWQ. For each layer, we test integer-code replacements while keeping the scales and zero points fixed. In each row of the weight matrix, we retain the single-code change that most reduces reconstruction loss on calibration data. We sort these changes from the largest loss reduction to the smallest. Starting from the original quantized weights each time, we create five modified weight matrices by applying the top $1/256$, $1/128$, $1/64$, $1/32$, and $1/16$ of the changes in this list, respectively. This finally yields 560 modified weight matrices, all with lower reconstruction loss on the calibration data than the corresponding original quantized weights. We then evaluate each matrix by replacing only the corresponding layer's weights and measuring model PPL on WT2 data reserved for evaluation. Of these 560 replacements, 413 (73.8\%) increase PPL relative to the initial quantized model.

\begin{figure*}[t]
  \centering
  \setlength{\parskip}{0pt}
  \setlength{\abovecaptionskip}{-6pt}
  \includegraphics[width=\textwidth]{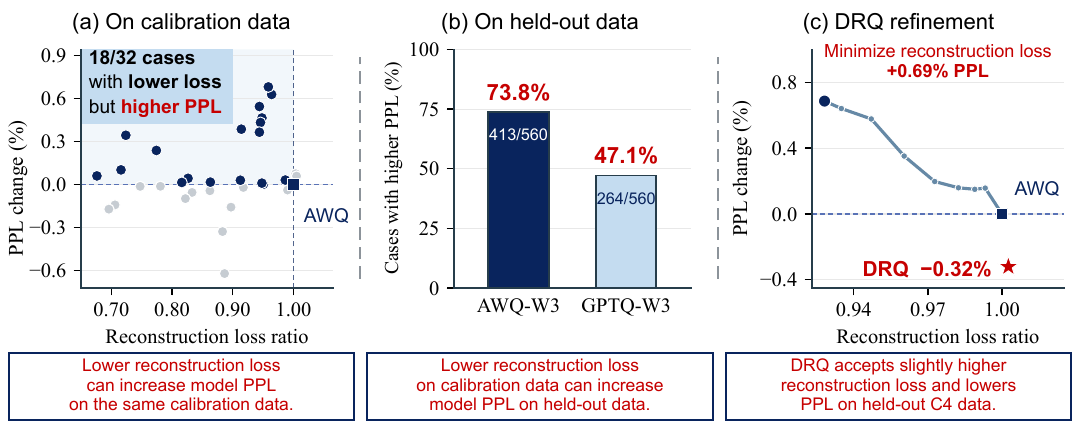}
\caption{Lower reconstruction loss need not improve quantized model performance. Results are from Llama-3.2-1B-Instruct with 3-bit quantization. (a) Among 32 modifications to AWQ-quantized weights, 18 reduce reconstruction loss but increase model PPL on the same WT2 calibration data. (b) Among 560 modified weight matrices per method that reduce reconstruction loss on calibration data, 413 (73.8\%) for AWQ and 264 (47.1\%) for GPTQ increase model PPL on held-out WT2 data. (c) Further minimizing reconstruction loss increases held-out C4 PPL by 0.69\%, whereas DRQ accepts slightly higher reconstruction loss on calibration data and lowers C4 PPL by 0.32\%. Each evaluation modifies one linear layer, keeping all other weights fixed. In (a,c), reconstruction loss is divided by its initial value, and PPL changes are relative to the initial quantized model.}
  \label{fig:reconstruction-trap}
\end{figure*}

These results expose a mismatch between reducing reconstruction loss on calibration data and improving the performance of the quantized model. Ideally, minimizing reconstruction loss brings the outputs of quantized layers closer to those of their full-precision counterparts on calibration data. Yet this closer match need not generalize to other datasets or improve model performance even on the same calibration data. Optimizing reconstruction loss alone can therefore degrade the performance of the quantized model.

To figure out this problem, we study how this loss depends on the input activations observed during calibration. For example, when an input channel of a linear layer stays close to zero, even large differences between the quantized and full-precision weights in the corresponding column can have little effect on the layer outputs. Yet new data could produce larger activations in this channel, so the same weight differences can cause larger output differences and thus higher reconstruction loss. Our analysis identifies when a quantized weight matrix with lower reconstruction loss on calibration data can have higher loss than another under constrained changes in the distribution of input activations (\cref{thm:ordering-certificate}).

Motivated by this analysis, we propose \emph{Distributionally Robust Quantization (DRQ)}, a general refinement framework for weight-only PTQ. DRQ minimizes the worst-case reconstruction loss over a set of input activation distributions, with constraints on how these distributions differ from the calibration distribution. This objective combines reconstruction loss on calibration data with its largest possible increase under these constraints. A real illustrative example in \cref{fig:reconstruction-trap}c shows that further minimizing reconstruction loss on calibration data increases model PPL on held-out C4. DRQ instead accepts a small increase in reconstruction loss on calibration data and lowers PPL on held-out C4 relative to the initial model quantized by AWQ. The resulting quantized weights therefore perform better on data not used for calibration. DRQ updates the integer codes of quantized weights while keeping the bit width, grouping, scales, zero points, and inference operators unchanged. All refinement is offline, so DRQ adds no inference overhead.

\paragraph{Contributions.} We uncover a counterintuitive phenomenon in weight-only PTQ. Lower reconstruction loss on calibration data does not necessarily lead to better performance of the quantized model, even on the same data. Motivated by this finding, we analyze how changes in input activation distributions affect reconstruction loss. This analysis leads to DRQ, which refines quantized weights by minimizing the worst-case reconstruction loss over a constrained set of input activation distributions. DRQ updates the integer codes representing quantized weights while preserving the quantization format and inference operators, so these improvements add no inference overhead. Experiments show that DRQ improves models quantized by six representative PTQ methods, with gains across both dense and mixture-of-experts (MoE) architectures.

%% file: tex/main/03_preliminaries.tex
\section{Preliminaries}
\label{sec:preliminaries}

\subsection{Weight-only quantization on a fixed grid}

Quantization approximates high-precision values using a finite set of values that can be stored with fewer bits. Weight-only post-training quantization applies this to pretrained weights while retaining the original activation precision. Consider a linear layer with weights $\bW\in\R^{m\times d}$, where $d$ and $m$ are the input and output dimensions. A $b$-bit uniform quantizer represents each weight $w$ by an integer code $q\in\mathcal Z_b=\{0,\ldots,2^b-1\}$. With round-to-nearest (RTN), we compute the code $q$ and its corresponding quantized weight $\hat w$ as
\begin{equation*}
q=\operatorname{clip}\!\left(\operatorname{round}(w/s)+z,\,0,\,2^b-1\right),
\qquad \hat w=s(q-z).
\end{equation*}

Here $s>0$ is the scale, which sets the spacing between adjacent quantized values, and the zero point $z$ is the integer code corresponding to zero. The rounding operation selects the nearest integer, and clipping keeps the code between $0$ and $2^b-1$. Following common practice in most PTQ work, we use group-wise quantization \citep{dettmers2023case4bit,liang2026paroquant}, which divides each row of $\bW$ into groups of $g$ consecutive weights along the input dimension. Each group shares a scale $s$ and a zero point $z$. Large weights in one group therefore do not force other groups to use a larger quantization step, which helps represent smaller weights more accurately. With $s$ and $z$ fixed, $\hat w=s(q-z)$ shows that each quantized weight is determined by its integer code $q$. DRQ therefore refines the weights by updating only their integer codes.

\subsection{Reconstruction on calibration data}

Calibration uses sample text to collect the input activations reaching each layer. Let the input activation matrix $\bX=[x_1,\ldots,x_n]\in\R^{d\times n}$ contain these vectors, with one token per column. The full-precision and quantized outputs are $\bW\bX$ and $\What\bX$. Their squared difference, summed over output channels and averaged over tokens, is the reconstruction loss
\begin{equation}
\cL(\What)
=\frac{1}{n}\norm{\What\bX-\bW\bX}_{\rm F}^{2}
=\Tr\!\left((\What-\bW)\bH(\What-\bW)^\top\right),
\label{eq:empirical-reconstruction}
\end{equation}
where $\bH=n^{-1}\bX\bX^\top$ is the activation second-moment matrix. The squared Frobenius norm sums the squared differences between the two output matrices, and $\Tr$ sums matrix diagonal entries. Most PTQ methods such as GPTQ, AWQ, and OmniQuant \citep{frantar2023gptq,lin2024awq,omniquant} minimize reconstruction loss in different ways to obtain quantized weights $\What$.

%% file: tex/main/05_marq.tex
\section{DRQ: Distributionally Robust Quantization}
\label{sec:robust-risk}
\label{sec:method}

\begin{figure*}[t]
  \centering
  \begin{minipage}{0.92\textwidth}
  \centering
  \setlength{\parskip}{0pt}
  \setlength{\abovecaptionskip}{0pt}
  \includegraphics[width=\linewidth]{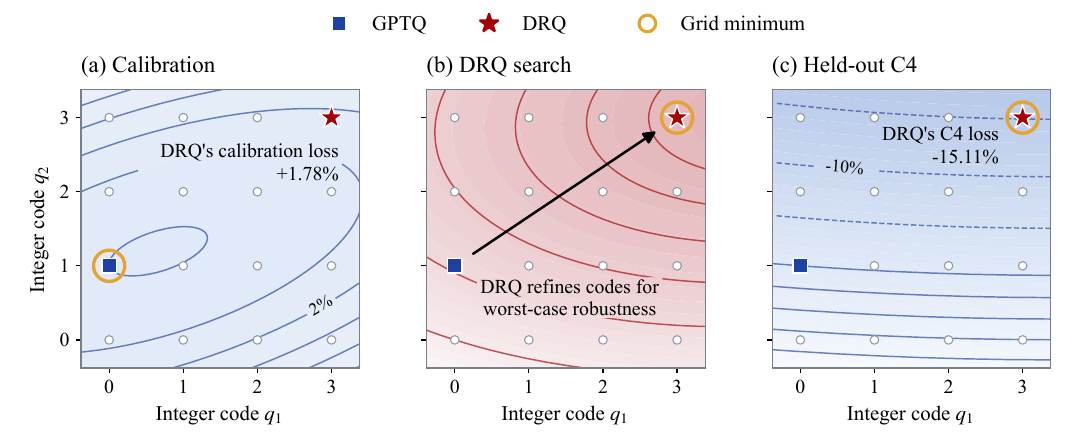}
  \caption{An illustrative example of integer-code refinement with DRQ on Llama-3.2-1B-Instruct quantized with 2-bit GPTQ. We select two integer codes, $q_1$ and $q_2$, from the quantized weights and plot the loss contours over different pairs of code values. The percentages indicate changes in reconstruction loss relative to the initial quantized weights. Panel (a) shows reconstruction loss on the WikiText-2 (WT2) calibration data, panel (b) illustrates how DRQ refines the two integer codes, and panel (c) shows reconstruction loss on C4. Although DRQ increases reconstruction loss on the calibration data by 1.78\%, it reduces reconstruction loss on C4 by 15.11\%, both relative to the initial quantized weights.}
  \label{fig:code-selection-geometry}
  \end{minipage}
\end{figure*}

In this section, we first analyze how changes in the distribution of input activations affect reconstruction loss (\cref{sec:ranking-stability}). Based on this analysis, we formulate a worst-case reconstruction objective and derive an equivalent form for optimizing quantized weights (\cref{sec:marq-objective}). Finally, we describe how DRQ optimizes this objective by updating the integer codes of weights produced by an existing PTQ method, while keeping the quantization parameters fixed (\cref{sec:marq-refinement}).

\begingroup

\subsection{Reconstruction under changing input distributions}
\label{sec:ranking-stability}

New data can produce input activations that differ from those observed during calibration. Given the quantized weights $\What$, \cref{eq:empirical-reconstruction} shows that the effect of input activations on reconstruction loss is determined by the second-moment matrix $\bH$. We can therefore study how new data affect this loss by examining changes in that matrix. To describe these changes, we introduce a perturbation matrix $\boldsymbol\Delta$ and write the second-moment matrix of the new input activations as $\bH+\boldsymbol\Delta$. Here, $\bH$ is computed from the calibration data, and $\boldsymbol\Delta$ is the difference in second-moment matrices between the new and the calibration data.

To examine how changes in the input distribution can increase reconstruction loss, we restrict $\boldsymbol\Delta$ to be positive semidefinite, written $\boldsymbol\Delta\succeq0$. Under this constraint, the expected reconstruction loss of any fixed quantized weights can increase or remain unchanged. One example is adding noise of the form $\varepsilon u$ to an input activation vector $x$, where $\varepsilon$ has mean zero and is independent of $x$. We introduce an \emph{input direction} $u\in\R^d$ to describe how the input channels change and measure the resulting increase in reconstruction loss. In the noise example above, its entries specify the sign and relative size of the noise added to each channel. For two input channels, $u=(1,1)^\top$ means adding $\varepsilon$ to both channels, so they increase or decrease together. With $u=(1,-1)^\top$, we add $\varepsilon$ to the first channel and subtract it from the second, so one increases while the other decreases. In this example, the second-moment matrix changes by $\boldsymbol\Delta=\mathbb{E}[\varepsilon^2]uu^\top\succeq0$. For fixed quantized weights, this change increases the expected reconstruction loss by $\mathbb{E}[\varepsilon^2]\norm{(\What-\bW)u}_2^2$. Comparing input directions under the same distribution-change budget therefore lets us identify the changes that increase this loss the most.

The loss increase in the example above depends on both the input direction and the size of the added noise. Without a limit on the noise, reconstruction loss can grow arbitrarily large. We therefore impose $\Tr(\bC\boldsymbol\Delta)\le\alpha$, where $\alpha\ge0$ is the distribution-change budget. The positive-definite matrix $\bC$ determines how much change is permitted along each input direction. For a change $\boldsymbol\Delta=tuu^\top$ with $t\ge0$, the constraint becomes $t\,u^\top\bC u\le\alpha$. Thus, a larger $u^\top\bC u$ permits a smaller $t$ under the same budget. To compare all directions at the same budget, we rescale each $u$ to satisfy $u^\top\bC u=1$. Then $\boldsymbol\Delta=\alpha uu^\top$ uses the full budget $\alpha$.

Panels~(a) and~(c) of \cref{fig:code-selection-geometry} illustrate how weights with lower reconstruction loss on calibration data can have higher loss on new inputs. Consider two quantized weight matrices with $\cL(\What_1)<\cL(\What_2)$. Input changes can increase both losses, but increase the loss of $\What_1$ more. For its loss to remain no greater than that of $\What_2$, the difference between these increases must not exceed the original loss difference $\cL(\What_2)-\cL(\What_1)$. The following theorem gives the exact condition.

\begin{theorem}[Reconstruction loss under distribution changes]
\label{thm:ordering-certificate}
Let $\What_1$ and $\What_2$ satisfy $\cL(\What_1)<\cL(\What_2)$. Consider input activation distributions with second-moment matrix $\bH+\boldsymbol\Delta$, where $\boldsymbol\Delta\succeq0$ and $\Tr(\bC\boldsymbol\Delta)\le\alpha$. The expected reconstruction loss of $\What_1$ is no greater than that of $\What_2$ for every such distribution if and only if every direction $u$ normalized by $u^\top\bC u=1$ satisfies
\begin{equation}
\alpha\!\left(\norm{(\What_1-\bW)u}_2^2-\norm{(\What_2-\bW)u}_2^2\right)\le\cL(\What_2)-\cL(\What_1).
\label{eq:directional-ranking-stability}
\end{equation}
If the inequality fails for a direction $u$ with $u^\top\bC u=1$, the change $\boldsymbol\Delta=\alpha uu^\top$ satisfies both constraints and makes $\What_1$ have higher expected reconstruction loss than $\What_2$.
\end{theorem}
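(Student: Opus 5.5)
The plan is to reduce everything to a linear functional of $\boldsymbol\Delta$ and then use the extreme-ray structure of the constraint set. For any quantized weights $\What$, write $\mathbf{E}=\What-\bW$. By \cref{eq:empirical-reconstruction}, the expected reconstruction loss under second-moment matrix $\bH+\boldsymbol\Delta$ is $\Tr(\mathbf{E}(\bH+\boldsymbol\Delta)\mathbf{E}^\top)=\cL(\What)+\Tr(\mathbf{E}^\top\mathbf{E}\,\boldsymbol\Delta)$. With $\mathbf{E}_i=\What_i-\bW$ and $\mathbf{M}=\mathbf{E}_1^\top\mathbf{E}_1-\mathbf{E}_2^\top\mathbf{E}_2$ (a symmetric matrix), the claim ``$\What_1$ is no worse than $\What_2$ for every admissible distribution'' is equivalent to
\begin{equation*}
\sup\{\Tr(\mathbf{M}\boldsymbol\Delta):\boldsymbol\Delta\succeq0,\ \Tr(\bC\boldsymbol\Delta)\le\alpha\}\le\cL(\What_2)-\cL(\What_1).
\end{equation*}
Note also that $u^\top\mathbf{M}u=\norm{\mathbf{E}_1u}_2^2-\norm{\mathbf{E}_2u}_2^2$, so \cref{eq:directional-ranking-stability} reads $\alpha\,u^\top\mathbf{M}u\le\cL(\What_2)-\cL(\What_1)$.

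\emph{Necessity and the final claim.} Take any $u$ with $u^\top\bC u=1$ and set $\boldsymbol\Delta=\alpha uu^\top$. This matrix is positive semidefinite, and $\Tr(\bC\boldsymbol\Delta)=\alpha u^\top\bC u=\alpha$, so it is admissible. Its loss gap is $\alpha u^\top\mathbf{M}u$. If the ordering holds for all admissible $\boldsymbol\Delta$, \cref{eq:directional-ranking-stability} follows. If \cref{eq:directional-ranking-stability} fails for some $u$, this same $\boldsymbol\Delta$ makes the loss of $\What_1$ strictly exceed that of $\What_2$, which is the last sentence of the theorem.

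\emph{Sufficiency (the main step).} Take an arbitrary admissible $\boldsymbol\Delta$ and write its spectral decomposition $\boldsymbol\Delta=\sum_k\lambda_k v_kv_k^\top$ with $\lambda_k\ge0$. Discard the terms with $\lambda_k=0$. For each remaining term, set $c_k=v_k^\top\bC v_k>0$, which is positive because $\bC\succ0$, and $u_k=v_k/\sqrt{c_k}$, so that $u_k^\top\bC u_k=1$. Then $\boldsymbol\Delta=\sum_k t_k u_ku_k^\top$ with weights $t_k=\lambda_kc_k\ge0$ and $\sum_k t_k=\Tr(\bC\boldsymbol\Delta)\le\alpha$. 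It follows that
\begin{equation*}
\Tr(\mathbf{M}\boldsymbol\Delta)=\sum_k t_k\,u_k^\top\mathbf{M}u_k\le\sum_k \frac{t_k}{\alpha}\bigl(\cL(\What_2)-\cL(\What_1)\bigr)\le\cL(\What_2)-\cL(\What_1).
\end{equation*}
The first inequality applies \cref{eq:directional-ranking-stability} to each normalized $u_k$. The second uses $\sum_k t_k\le\alpha$ together with the hypothesis $\cL(\What_2)-\cL(\What_1)>0$. This strict positivity is exactly why the theorem assumes $\cL(\What_1)<\cL(\What_2)$: it lets us scale a convex combination whose total mass is below the budget.

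\emph{Edge cases.} If $\alpha=0$, then $\Tr(\bC\boldsymbol\Delta)\le 0$ with $\bC\succ0$ and $\boldsymbol\Delta\succeq0$ forces $\boldsymbol\Delta=0$. Both sides of the equivalence are then trivially true, so $\alpha>0$ may be assumed wherever we divide by $\alpha$.

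I expect the decomposition step to be the only real obstacle. The point to get right is that the rank-one matrices $\alpha uu^\top$ with $u^\top\bC u=1$, together with $0$, generate the entire feasible set as convex combinations. The spectral rescaling by $c_k$ gives this directly, without needing any duality argument.
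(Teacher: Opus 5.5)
Your proof is correct, and it organizes the key step differently from the paper. The paper whitens the budget constraint by substituting $\bC^{1/2}\boldsymbol\Delta\bC^{1/2}$, which turns the feasible set into positive semidefinite matrices with trace at most $\alpha$. It then computes the support function $\max\Tr(\bB_{12}\boldsymbol\Delta)=\alpha\,[\lambda_{\max}(\bC^{-1/2}\bB_{12}\bC^{-1/2})]_+$, where $\bB_{12}$ is your $\mathbf M$. Finally, it deduces the directional condition from the fact that this maximum is attained at a rank-one (or zero) $\boldsymbol\Delta$.

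You never compute that maximum. Instead, you spectrally decompose an arbitrary feasible $\boldsymbol\Delta$ and rescale each eigenvector to unit $\bC$-norm. This writes $\boldsymbol\Delta$ explicitly as a convex combination of $0$ and matrices $\alpha uu^\top$ with $u^\top\bC u=1$, and you then apply the hypothesis term by term.

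Your route has three advantages:
\begin{itemize}
\item It is more elementary, with no matrix square roots or eigenvalue inequalities.
\item It proves the upper-bound half of the support-function identity, which the paper's proof leaves implicit because it only exhibits a feasible point attaining the value.
\item For the last sentence of the theorem, you use the failing $u$ itself. This matches the statement more literally than the paper's construction of $\boldsymbol\Delta^*$ from a leading eigenvector.
\end{itemize}

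The paper's route, in exchange, yields the closed-form quantity $e_{12}(\bC)$. This gives the exact critical budget $(\cL(\What_2)-\cL(\What_1))/e_{12}(\bC)$. It also replaces the infinitely many directional checks with a single generalized eigenvalue computation. And the support-function identity is reused later when comparing refined and initial weights.

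One small correction to your commentary: the sufficiency step needs only $\cL(\What_2)-\cL(\What_1)\ge0$, not strict positivity. Without that hypothesis, the step $\tfrac{1}{\alpha}\bigl(\sum_k t_k\bigr)(\cL(\What_2)-\cL(\What_1))\le\cL(\What_2)-\cL(\What_1)$ breaks when the gap is negative and $\sum_k t_k<\alpha$, for instance at $\boldsymbol\Delta=0$.
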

See Appendix~\ref{app:ordering-certificate} for the proof. \Cref{thm:ordering-certificate} motivates a refinement objective that accounts for both reconstruction loss on calibration data and its largest possible increase under the distribution constraints above. DRQ, introduced next, follows this approach. In the schematic in \cref{fig:code-selection-geometry}, panel~(b) illustrates how DRQ refines codes to accept slightly higher reconstruction loss on calibration data in panel~(a) and achieve lower loss on held-out inputs in panel~(c).

\Needspace{4\baselineskip}
\subsection{Robust reconstruction objective}
\label{sec:marq-objective}

The preceding analysis compares two weight matrices under the same distribution changes. For refinement, we need an objective for evaluating each weight matrix. The proposed method, Distributionally Robust Quantization (DRQ), therefore considers the largest reconstruction loss that each matrix can have under the stated constraints and seeks weights that make this loss as small as possible. Using the same distribution-change budget $\alpha$, we minimize
\begin{equation}
\min_{\What\in\cQ}\; \max_{\boldsymbol\Delta\succeq0,\,\Tr(\bC\boldsymbol\Delta)\le\alpha}
\Tr\!\left((\What-\bW)(\bH+\boldsymbol\Delta)(\What-\bW)^\top\right) ~,
\label{eq:marq-minmax}
\end{equation}

where $\cL_{\rm DRQ}(\What):= \max_{\boldsymbol\Delta\succeq0,\,\Tr(\bC\boldsymbol\Delta)\le\alpha}
\Tr\!((\What-\bW)(\bH+\boldsymbol\Delta)(\What-\bW)^\top)$. The outer minimization selects quantized weights on the existing quantization grid that minimize this loss. To evaluate $\cL_{\rm DRQ}(\What)$, the following theorem reduces the inner maximization to finding the input direction $u$ along which reconstruction loss increases the most under the same distribution-change budget. The resulting expression is reconstruction loss on calibration data plus this worst-case increase.

\begin{theorem}[Worst-case reconstruction loss]
\label{thm:exact-risk}
For any $\What\in\R^{m\times d}$, $\bC\succ0$, and $\alpha\ge0$, the worst-case loss in \cref{eq:marq-minmax} satisfies
\begin{equation}
\cL_{\rm DRQ}(\What)
=\cL(\What)+\alpha\cS_{\bC}(\What),
\label{eq:marq-loss}
\end{equation}
where the \emph{worst-case directional error} is
\begin{equation}
\cS_{\bC}(\What)
=\max_{u^\top\bC u=1}\norm{(\What-\bW)u}_2^2.
\label{eq:coverage-risk}
\end{equation}
\end{theorem}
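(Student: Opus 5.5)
Write $\mathbf{E}=\What-\bW$. The loss is linear in $\boldsymbol\Delta$:
\[
\Tr\!\big(\mathbf{E}(\bH+\boldsymbol\Delta)\mathbf{E}^\top\big)=\cL(\What)+\Tr(\mathbf{E}^\top\mathbf{E}\,\boldsymbol\Delta).
\]
So the inner maximization in \cref{eq:marq-minmax} reduces to the semidefinite program $\max\{\Tr(\mathbf{M}\boldsymbol\Delta):\boldsymbol\Delta\succeq0,\ \Tr(\bC\boldsymbol\Delta)\le\alpha\}$ with $\mathbf{M}=\mathbf{E}^\top\mathbf{E}\succeq0$. It remains to show that its value equals $\alpha\cS_{\bC}(\What)$. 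The case $\alpha=0$ is trivial, since the constraints force $\boldsymbol\Delta=0$; assume $\alpha>0$.

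\textbf{Lower bound.} For any $u$ with $u^\top\bC u=1$, the choice $\boldsymbol\Delta=\alpha uu^\top$ is feasible. It attains $\alpha\,u^\top\mathbf{M}u=\alpha\norm{\mathbf{E}u}_2^2$, and maximizing over $u$ gives value at least $\alpha\cS_{\bC}(\What)$. This is exactly the construction in the last sentence of \cref{thm:ordering-certificate}. The maximum defining $\cS_{\bC}$ is attained because $\{u:u^\top\bC u=1\}$ is compact when $\bC\succ0$.

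\textbf{Upper bound.} This is the main step. Change variables with $\mathbf{G}=\bC^{1/2}\boldsymbol\Delta\bC^{1/2}\succeq0$, so that $\Tr(\bC\boldsymbol\Delta)=\Tr(\mathbf{G})\le\alpha$ and
\[
\Tr(\mathbf{M}\boldsymbol\Delta)=\Tr(\widetilde{\mathbf{M}}\mathbf{G}),\qquad \widetilde{\mathbf{M}}=\bC^{-1/2}\mathbf{M}\bC^{-1/2}.
\]
Eigendecompose $\mathbf{G}=\sum_i\gamma_i v_iv_i^\top$ with $\gamma_i\ge0$, $\sum_i\gamma_i\le\alpha$, and $\norm{v_i}_2=1$. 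Then
\[
\Tr(\widetilde{\mathbf{M}}\mathbf{G})=\sum_i\gamma_i\,v_i^\top\widetilde{\mathbf{M}}v_i\le\alpha\,\lambda_{\max}(\widetilde{\mathbf{M}}),
\]
where we use $\widetilde{\mathbf{M}}\succeq0$ to handle the case $\sum_i\gamma_i<\alpha$. Finally, substituting $u=\bC^{-1/2}v$ gives
\[
\lambda_{\max}(\widetilde{\mathbf{M}})=\max_{\norm{v}_2=1}v^\top\widetilde{\mathbf{M}}v=\max_{u^\top\bC u=1}\norm{\mathbf{E}u}_2^2=\cS_{\bC}(\What).
\]

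Combining the two bounds gives equality, so $\cL_{\rm DRQ}(\What)=\cL(\What)+\alpha\cS_{\bC}(\What)$. The main care point is the upper bound: the change of variables needs $\bC\succ0$ so that $\bC^{\pm1/2}$ exist and preserve the PSD cone, and the budget inequality must be handled via the nonnegativity of $\lambda_{\max}$.
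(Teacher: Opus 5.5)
Your proposal is correct and follows essentially the same route as the paper. Both use the congruence $\mathbf G=\bC^{1/2}\boldsymbol\Delta\bC^{1/2}$ to turn the constraints into $\mathbf G\succeq0$, $\Tr(\mathbf G)\le\alpha$, bound $\Tr(\widetilde{\mathbf M}\mathbf G)\le\alpha\lambda_{\max}(\widetilde{\mathbf M})$ (you prove this via the eigendecomposition of $\mathbf G$; the paper states it directly), attain it with a rank-one change $\boldsymbol\Delta=\alpha uu^\top$, and identify $\lambda_{\max}(\widetilde{\mathbf M})=\cS_{\bC}(\What)$ through $u=\bC^{-1/2}v$. The remaining differences are cosmetic: you get the lower bound by maximizing over all feasible rank-one directions rather than plugging in the leading eigenvector, and your separate treatment of $\alpha=0$ is already covered by the general argument.
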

For an input vector $u$, $(\What-\bW)u=\What u-\bW u$ is the difference between the quantized and full-precision layer outputs. Among directions rescaled to satisfy $u^\top\bC u=1$, the \emph{worst-case input direction} gives the largest squared output difference, $\cS_{\bC}(\What)$. A distribution change $\boldsymbol\Delta=\alpha uu^\top$ along this direction uses the full budget and increases reconstruction loss by $\alpha\cS_{\bC}(\What)$. This is the largest increase among all distribution changes satisfying the constraints.

In \cref{eq:marq-minmax}, $\alpha$ sets the budget for changes in the input activation distribution. In the equivalent objective in \cref{eq:marq-loss}, it weights the worst-case directional error relative to reconstruction loss on calibration data. DRQ can accept slightly higher reconstruction loss on calibration data when this reduces the largest possible increase enough to lower the total worst-case loss. Appendix~\ref{app:exact-risk} gives the proof and a distribution change that attains this loss.

We introduce $\bC$ to account for input activations that remain small during calibration. When an input channel stays near zero, even large differences between the quantized and full-precision weights in the corresponding column can have little effect on the layer outputs. We therefore consider larger possible increases in these activations when evaluating the worst-case loss. To apply the same idea to combinations of channels, we set $\bC=(1-\beta)\bH+\beta\bI$, with $0<\beta\le1$. For directions with the same Euclidean norm, a smaller $u^\top\bH u$ gives a smaller $u^\top\bC u$ when $\beta<1$, permitting a larger increase under the same budget. At $\beta=1$, all directions with the same Euclidean norm are treated equally. The identity term keeps $\bC$ positive definite, so changes remain bounded in every direction.

\subsection{Refinement on a fixed quantization grid}
\label{sec:marq-refinement}

DRQ processes one Transformer block at a time and refines each linear layer after the base PTQ method quantizes its weights. It changes only integer codes, keeping the bit width, grouping, scales, and zero points fixed.\par

\begingroup
\vspace{-\parskip}
\restylefloat{algorithm}
\setlength{\intextsep}{-3.25pt}
\setlength{\columnsep}{12pt}
\begin{wrapalgorithm}{r}{0.66\textwidth}
\vspace{-0.45\baselineskip}
\setlength{\parskip}{0pt}
\caption{DRQ refinement of one linear layer}
\label{alg:drq-overview}
\begingroup
\small
\algrenewcommand\algorithmicindent{1em}
\vspace{4pt}
\begin{algorithmic}[1]
\Require \begin{tabular}[t]{@{}l@{}}
Full-precision weights $\bW$, initial quantized weights $\What$,\\
calibration activations $\bX$, $\alpha,\beta$, maximum rounds $K$
\end{tabular}
\State $\What_{\rm DRQ}\gets\What$,\quad $\bH\gets\bX\bX^\top/n,\quad \bC\gets(1-\beta)\bH+\beta\bI$
\For{$r=1,\ldots,K$}
  \State $u\gets\argmax_{u^\top\bC u=1}\norm{(\What_{\rm DRQ}-\bW)u}_2^2$
  \State Rank integer-code updates by their reduction in
  \Statex \hspace{\algorithmicindent}$\cL(\What_{\rm DRQ})+\alpha\norm{(\What_{\rm DRQ}-\bW)u}_2^2$.
  \State Apply the first $k$ ranked code updates to $\What_{\rm DRQ}$ for several values of $k$ to generate candidates $\mathcal P$.
  \State $\begin{aligned}[t]
    \cL_{\rm DRQ}(\What_k)&\gets\cL(\What_k)\\
      &\quad+\alpha\max_{u^\top\bC u=1}\norm{(\What_k-\bW)u}_2^2,\quad\forall\What_k\in\mathcal P
  \end{aligned}$
  \State $\mathcal P\gets\{\What_k\in\mathcal P:\ \cL_{\rm DRQ}(\What_k)<\cL_{\rm DRQ}(\What_{\rm DRQ})\}$
  \State \textbf{if} $\mathcal P=\varnothing$ \textbf{then break}
  \State $\What_{\rm DRQ}\gets\argmin_{\What_k\in\mathcal P}\cL_{\rm DRQ}(\What_k)$
\EndFor
\State \Return $\What_{\rm DRQ}$
\end{algorithmic}
\endgroup
\vspace{1pt}
\end{wrapalgorithm}

\noindent Algorithm~\ref{alg:drq-overview} summarizes how we optimize \cref{eq:marq-loss}, using numerical estimates of the worst-case directions and losses. The set $\mathcal P$ contains proposed weight matrices. To preserve the fit to calibration data, reconstruction loss may increase only by a preset percentage relative to the initial weights, on both the search subset and separate checking subsets. All refinement is offline and leaves inference operators unchanged, so DRQ adds no inference overhead. Appendix~\ref{app:algorithm-details} gives the search procedure and parameter settings, and Appendix~\ref{app:moe-experts} covers sparsely activated experts.\par

\par
\WFclear
\endgroup
\endgroup

%% file: tex/main/06_experiments.tex
\section{Experimental Evaluation}
\label{sec:experiments}
\suppressfloats[t]

\subsection{Experimental setup}

\paragraph{Models and tasks.}
We apply DRQ to both dense and mixture-of-experts (MoE) models, including Llama-3.2-Instruct (1B, 3B), Llama-3.1-Instruct (8B, 70B) \citep{llama3}, and Qwen3-30B-A3B \citep{qwen3}. We evaluate model quality after quantization using two types of metrics: (1) perplexity on WikiText-2 (WT2) \citep{wikitext} and C4 \citep{raffel2020c4}; (2) zero-shot accuracy on seven downstream tasks, including PIQA \citep{bisk2020piqa}, HellaSwag \citep{zellers2019hellaswag}, WinoGrande \citep{sakaguchi2019winogrande}, ARC-Easy and ARC-Challenge \citep{clark2018arc}, BoolQ \citep{clark2019boolq}, and MMLU \citep{hendrycks2021mmlu}.

\paragraph{Implementation.}
We evaluate DRQ at W4A16, W3A16, and W2A16 to study how well it preserves model quality across different weight bit widths. All three settings use group-wise weight quantization with a group size of 128 \citep{frantar2023gptq,lin2024awq}. Unless otherwise specified, we calibrate with 128 WikiText-2 training sequences of 2,048 tokens and select DRQ hyperparameters using 64 validation samples from Pile \citep{gao2020pile}. Experiments run on NVIDIA A800 80GB GPUs unless otherwise specified. Further implementation details are provided in Appendix~\ref{app:reproducibility}.

\paragraph{Baselines.}
We use GPTQ \citep{frantar2023gptq} and AWQ \citep{lin2024awq}, two widely used weight-only PTQ methods, as the main baselines for most experiments. To demonstrate that DRQ generalizes across quantization methods, we also include round-to-nearest (RTN), OmniQuant \citep{omniquant}, GPTAQ \citep{li2025gptqv2}, and ParoQuant \citep{liang2026paroquant}.

\begin{table*}[!htbp]
\centering
\input{tex/tables/table1_dense.tex}
\end{table*}

\subsection{Dense-model results across sizes and bit widths}

\Cref{tab:mock-dense} shows that DRQ improves average task accuracy across both GPTQ and AWQ at every evaluated model size and bit width, while reducing perplexity in most cases. The gains are particularly pronounced when aggressive quantization causes substantial quality loss. For Llama-3.1-70B with GPTQ W2, DRQ lowers C4 PPL from 31.77 to 27.11 and raises accuracy from 48.23\% to 54.56\%. Similarly, for Llama-3.2-3B with GPTQ W3, C4 PPL falls from 30.34 to 22.11 and accuracy increases from 55.83\% to 60.24\%. These results show that refinement can recover part of the quality lost at low precision, although severe AWQ-W2 degradation persists. Its benefits also extend to stronger initial quantized models. At W4, where the baselines already retain much of their full-precision accuracy, DRQ improves accuracy in all eight comparisons. Appendix~\ref{app:dense-results} provides the detailed results.

\subsection{Extension to mixture-of-experts models}

\Cref{tab:mock-moe} extends the evaluation to Qwen3-30B-A3B at W3A16. DRQ lowers perplexity on both WT2 and C4 and improves accuracy on five of the seven downstream tasks. Together with the dense-model results, these improvements demonstrate DRQ's applicability across dense and MoE architectures. Further details on MoE quantization and expert refinement are provided in Appendix~\ref{app:moe-experts}.

\begingroup
\setlength{\intextsep}{4pt}
\begin{table}[H]
\centering
\input{tex/tables/table2_moe.tex}
\end{table}
\endgroup

\clearpage
\subsection{Versatility across six PTQ methods}
DRQ improves models quantized by all six PTQ methods, reducing both WT2 and C4 perplexity and increasing average accuracy on Llama-3.2-3B W3 (\cref{fig:cross-host-radar}). ParoQuant achieves the best results before refinement and retains its lead on all three metrics after applying DRQ, showing that even a strong initial quantized model can benefit from refinement. Detailed task-level results and computation costs are provided in Appendices~\ref{app:cross-host-results} and~\ref{app:cross-host-cost}.

\begin{figure}[!htbp]
\centering
\setlength{\parskip}{0pt}
\setlength{\abovecaptionskip}{-1pt}
\includegraphics[trim=0 12pt 0 0,clip,width=0.94\textwidth]{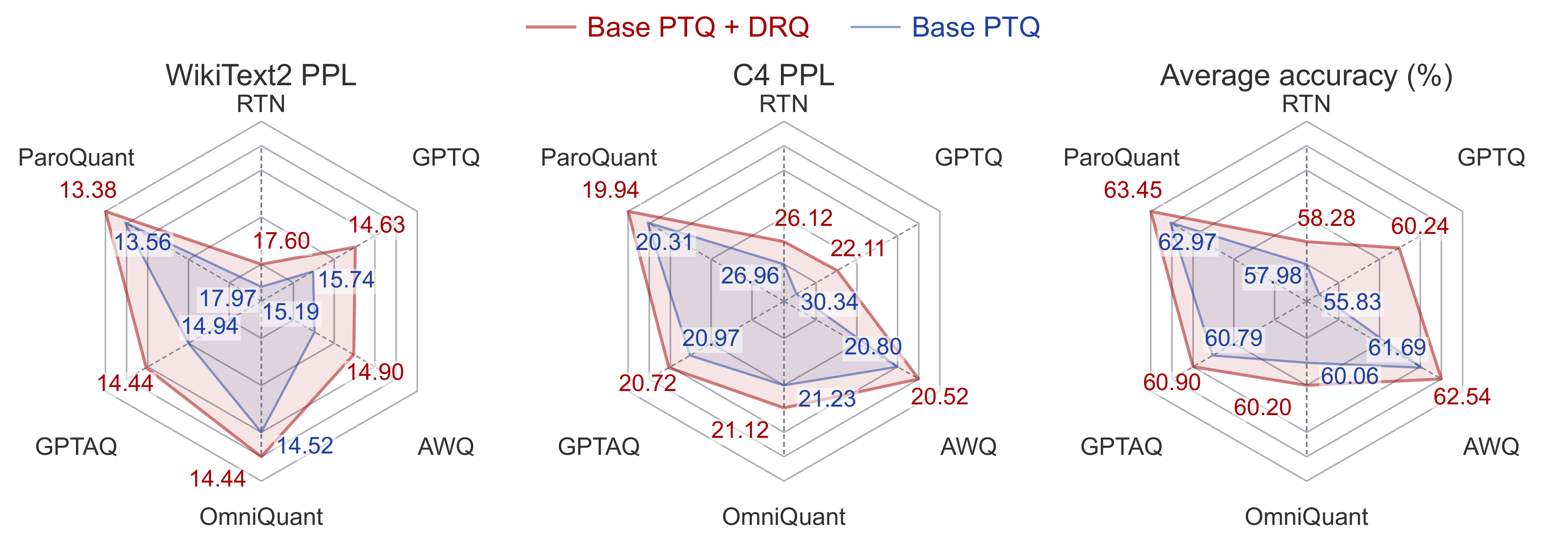}
\caption{Six-method comparison on Llama-3.2-3B W3.}
\label{fig:cross-host-radar}
\end{figure}

\FloatBarrier
\subsection{Calibration size and the choice of objective}
\label{sec:calibration-study}

We vary calibration size from 16 to 256 sequences of 2,048 tokens from WT2 or C4 for Llama-3.2-3B-Instruct with GPTQ W4. Reconstruction-only refinement (rec-only) uses the same code search as DRQ with $\alpha=0$.

\vspace{-2pt}
\begin{figure}[H]
\centering
\setlength{\parskip}{0pt}
\setlength{\abovecaptionskip}{-4pt}
\includegraphics[trim=0 6pt 0 0,clip,width=0.80\linewidth]{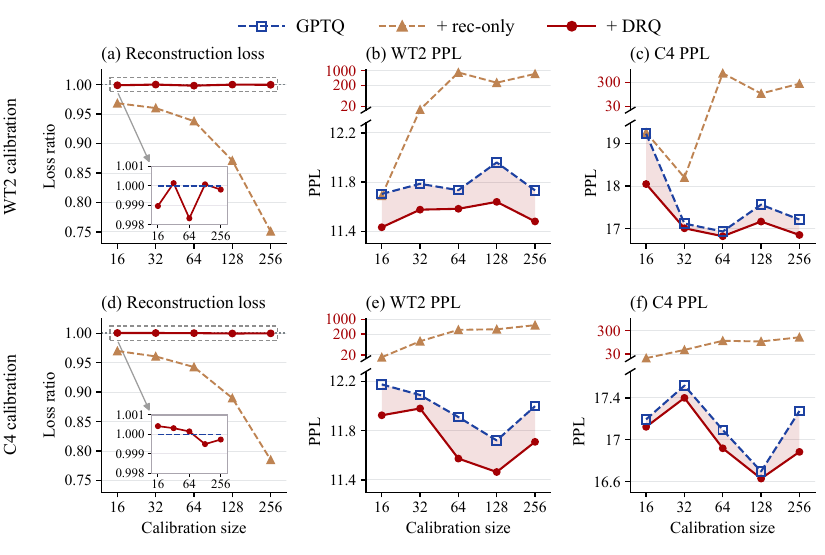}
\caption{Reconstruction loss and PPL across calibration sizes for Llama-3.2-3B-Instruct with GPTQ W4. Loss ratios divide the reconstruction loss summed across layers after refinement by that before refinement, using each run's own calibration activations.}
\label{fig:calibration-budget}
\end{figure}

The first column of \cref{fig:calibration-budget} shows larger relative loss reductions from rec-only as calibration size grows. This further reduction is possible because GPTQ does not fully minimize reconstruction loss over the larger calibration sets. Yet even as rec-only reduces the loss further relative to GPTQ, model performance worsens, with PPL increasing by orders of magnitude. DRQ instead keeps reconstruction loss close to GPTQ's while achieving lower PPL on both datasets. These results show that more calibration data need not resolve this mismatch, as also established theoretically in \cref{thm:population-persistence}. \Cref{fig:parameter-sensitivity} in Appendix~\ref{app:parameter-sensitivity} examines $\alpha$ and $\beta$ at W3.

\FloatBarrier
\Needspace{16\baselineskip}
\noindent
\begin{minipage}[b]{0.625\textwidth}
\vspace{0pt}
\subsection{Efficiency of DRQ}
\label{sec:efficiency}
\Cref{fig:quantization-cost} shows the offline cost of DRQ on Llama-3.1-8B at W3. We time the loop that quantizes successive Transformer blocks, including forward passes on calibration data. Across 20 configurations, refinement takes almost the same time with GPTQ and AWQ, averaging 6.01 and 5.96 minutes, respectively. It is because DRQ applies the same code refinement within the existing quantization grid in both cases. Further efficiency analysis appears in Appendix~\ref{app:efficiency}.
\end{minipage}\hspace{0.025\textwidth}%
\begin{minipage}[b]{0.33\textwidth}
\vspace{0pt}
\centering
\setlength{\parskip}{0pt}
\setlength{\abovecaptionskip}{1pt}
\setlength{\belowcaptionskip}{0pt}
\includegraphics[width=\linewidth]{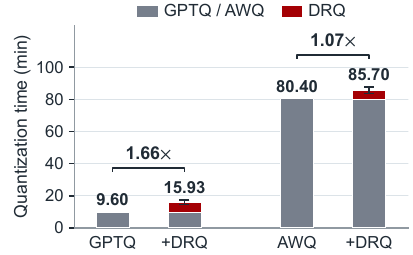}\par
\small
\makeatletter
\long\def\@makecaption#1#2{%
  \vskip\abovecaptionskip
  {\centering #1: #2\par}%
  \vskip\belowcaptionskip}
\makeatother
\captionof{figure}{Quantization time for \mbox{Llama-3.1-8B W3.}}
\label{fig:quantization-cost}
\end{minipage}
\par

%% file: tex/tables/table1_dense.tex
\begingroup
\definecolor{MockBlue}{HTML}{176B91}
\definecolor{MockInk}{HTML}{202B35}
\definecolor{MockMuted}{HTML}{5E6873}

\caption{Perplexity and average task accuracy (\%) of dense Llama models across model sizes and weight bit widths. FP denotes the unquantized full-precision model.}
\label{tab:mock-dense}
\fontsize{8.25}{10.25}\selectfont
\setlength{\tabcolsep}{2.7pt}
\renewcommand{\arraystretch}{1.00}
\noindent
\begin{tabular*}{\linewidth}{@{\extracolsep{\fill}}cl*{12}{c}@{}}
\toprule
\multicolumn{2}{l}{Quantization} & \multicolumn{3}{c}{\textbf{Llama-3.2-1B}} & \multicolumn{3}{c}{\textbf{Llama-3.2-3B}} & \multicolumn{3}{c}{\textbf{Llama-3.1-8B}} & \multicolumn{3}{c}{\textbf{Llama-3.1-70B}} \\
\cmidrule(lr){3-5}\cmidrule(lr){6-8}\cmidrule(lr){9-11}\cmidrule(lr){12-14}
Bits & Method & WT2 & C4 & ACC & WT2 & C4 & ACC & WT2 & C4 & ACC & WT2 & C4 & ACC \\
\midrule
FP & Unquantized & 13.16 & 20.82 & 58.68 & 11.05 & 16.19 & 66.48 & 7.22 & 11.24 & 74.40 & 3.78 & 8.24 & 80.64 \\
\midrule
\multirow{4}{*}{W2} & GPTQ & 1719.88 & 1.03e4 & 34.75 & 141.54 & 645.35 & 34.58 & 139.90 & 681.48 & 35.49 & 13.92 & 31.77 & 48.23 \\
 & \textcolor{MockBlue}{\quad + DRQ} & \textbf{1402.87} & \textbf{7233.14} & \textbf{35.93} & \textbf{130.02} & \textbf{585.86} & \textbf{35.03} & \textbf{98.67} & \textbf{390.36} & \textbf{36.64} & \textbf{12.48} & \textbf{27.11} & \textbf{54.56} \\
\addlinespace[1pt]
 & AWQ & 1.08e5 & 1.22e5 & 35.39 & 2.42e5 & 5.57e5 & 35.38 & 1.63e6 & 1.84e6 & 37.80 & 1.42e6 & 1.25e6 & 37.99 \\
 & \textcolor{MockBlue}{\quad + DRQ} & \textbf{1.31e4} & \textbf{1.80e4} & \textbf{36.51} & \textbf{1.85e4} & \textbf{1.03e5} & \textbf{38.38} & \textbf{7.76e5} & \textbf{5.66e5} & \textbf{38.01} & \textbf{1.31e6} & \textbf{1.17e6} & \textbf{38.01} \\
\midrule
\multirow{4}{*}{W3} & GPTQ & 18.65 & 28.84 & 50.50 & 15.74 & 30.34 & 55.83 & 8.60 & 15.39 & 68.46 & 5.41 & 10.01 & 78.67 \\
 & \textcolor{MockBlue}{\quad + DRQ} & \textbf{17.64} & \textbf{28.42} & \textbf{51.68} & \textbf{14.63} & \textbf{22.11} & \textbf{60.24} & 8.61 & \textbf{14.91} & \textbf{69.48} & \textbf{5.39} & \textbf{9.92} & \textbf{79.03} \\
\addlinespace[1pt]
 & AWQ & 21.12 & 32.73 & 52.65 & 15.19 & 20.80 & 61.69 & 9.01 & 14.43 & 69.90 & 5.38 & 9.77 & 79.02 \\
 & \textcolor{MockBlue}{\quad + DRQ} & \textbf{20.89} & \textbf{31.82} & \textbf{53.08} & \textbf{14.90} & \textbf{20.52} & \textbf{62.54} & \textbf{9.00} & \textbf{14.43} & \textbf{70.10} & \textbf{5.37} & 9.77 & \textbf{79.23} \\
\midrule
\multirow{4}{*}{W4} & GPTQ & 14.27 & 22.24 & 57.47 & 11.96 & 17.57 & 65.26 & 7.48 & 12.07 & 73.11 & 4.17 & 8.59 & 80.22 \\
 & \textcolor{MockBlue}{\quad + DRQ} & \textbf{14.21} & \textbf{22.12} & \textbf{58.13} & \textbf{11.64} & \textbf{17.17} & \textbf{65.96} & \textbf{7.46} & \textbf{12.02} & \textbf{74.00} & \textbf{4.14} & \textbf{8.57} & \textbf{80.55} \\
\addlinespace[1pt]
 & AWQ & 14.55 & 22.63 & 57.27 & 11.56 & 16.89 & 66.18 & 7.55 & 12.03 & 73.49 & 4.18 & 8.62 & 80.38 \\
 & \textcolor{MockBlue}{\quad + DRQ} & \textbf{14.41} & \textbf{22.54} & \textbf{57.43} & \textbf{11.52} & \textbf{16.81} & \textbf{66.70} & \textbf{7.55} & \textbf{12.02} & \textbf{73.62} & 4.19 & \textbf{8.61} & \textbf{80.52} \\
\bottomrule
\end{tabular*}
\endgroup

%% file: tex/tables/table2_moe.tex
\begingroup
\definecolor{MockBlue}{HTML}{176B91}
\definecolor{MockInk}{HTML}{202B35}

\caption{Perplexity and seven-task accuracy (\%) for Qwen3-30B-A3B W3A16.}
\label{tab:mock-moe}
\fontsize{8.25}{10.25}\selectfont
\setlength{\tabcolsep}{1.5pt}
\renewcommand{\arraystretch}{1.10}
\noindent
\begin{tabular*}{\linewidth}{@{\extracolsep{\fill}}cl*{10}{c}@{}}
\toprule
\multicolumn{2}{l}{Quantization} & \multicolumn{2}{c}{Perplexity} & \multicolumn{8}{c}{Accuracy (\%)} \\
\cmidrule(lr){3-4}\cmidrule(lr){5-12}
Bits & Method & WT2 & C4 & PIQA & HellaSwag & MMLU & BoolQ & ARC-C & ARC-E & WinoGrande & Mean \\
\midrule
\multirow{2}{*}{W3} & GPTQ & 9.29 & 15.01 & 78.24 & 75.85 & 72.16 & 86.15 & 49.57 & 73.99 & 68.90 & 72.12 \\
 & \textcolor{MockBlue}{\quad + DRQ} & \textbf{9.20} & \textbf{14.90} & \textbf{78.78} & \textbf{76.00} & \textbf{73.07} & 85.87 & \textbf{52.99} & \textbf{76.05} & 67.56 & \textbf{72.90} \\
\bottomrule
\end{tabular*}
\endgroup

%% file: tex/main/02_related_work.tex
\section{Related Work}
\label{sec:related-work}

\paragraph{Post-training quantization of LLMs.} PTQ methods improve low-bit approximation by reducing the effect of outliers and optimizing quantized weights. Scaling and clipping control channel magnitudes and quantization ranges \citep{lin2024awq,xiao2023smoothquant,omniquant}, while rotations and affine transformations redistribute values across channels \citep{Ashkboos2024quarot,liuspinquant,liang2026paroquant,sun2025flatquant, zhao2026saliency}. QuIP and QTIP combine randomized transformations with adaptive rounding and trellis coding, respectively \citep{Chee2023quip,tseng2024qtip}. For weight optimization, GPTQ uses second-order error compensation during rounding \citep{frantar2023gptq}. Complementary work changes the reconstruction objective to better preserve model performance. GPTAQ accounts for errors from preceding quantized layers when matching full-precision layer outputs, and CoreQ adjusts the strength of this correction across layers \citep{li2025gptqv2,cha2026coreq}. GuidedQuant uses gradients of the model's final loss to weight reconstruction loss \citep{KimEtAl2025GuidedQuant}, while LFQ matches the model's token probabilities to those of the full-precision model when quantizing the final Transformer block \citep{lee2026lfq}.

\paragraph{Distributionally robust optimization.} Distributionally robust optimization (DRO) minimizes the worst expected loss over a set of plausible distributions. Moment constraints, divergence bounds, and transport distances provide different ways to define this uncertainty set \citep{delage2010moment,esfahani2018wasserstein,levy2020largescale}. Group DRO focuses on shifts among predefined groups, while other approaches learn or constrain how an adversary reweights individual samples \citep{sagawa2020groupdro,michel2022likelihood,setlur2023bitrate}. The choice of set determines which distribution changes the objective protects against. Allowing too much freedom can lead to overly conservative solutions or amplify the influence of outliers. Geometry-aware approaches use data structure to constrain these changes or calibrate robust risks \citep{liu2022datageometry,liu2024geometrycalibrated}, while outlier-resistant formulations address contamination in the data \citep{zhai2021doro}. Connections to regularization further explain how uncertainty sets induce penalties on the learning objective \citep{husain2020ipm,cranko2021lipschitz}. Research on these formulations also establishes population-risk guarantees for Wasserstein DRO \citep{azizian2023exact,le2025universal} and develops scalable solvers \citep{haddadpour2022composite,mehta2024drago,zhang2025revisitingdro}.

%% file: tex/main/07_conclusion.tex
\section{Conclusion}

The weights that minimize the reconstruction loss on calibration data need not minimize reconstruction loss on new inputs or yield the best model performance. We establish this gap between calibration fit and performance at deployment and show why exact optimization and more calibration data need not close it both theoretically and empirically. Our analysis characterizes how input changes reverse reconstruction rankings and motivates a distributionally robust objective that accounts for those changes. The proposed method, DRQ, combines reconstruction loss on calibration data with a worst-case directional error term to refine existing integer codes of quantized weights. This improves the selection criterion while preserving the quantization grid, scales, and zero points. Experiments across multiple PTQ methods and dense and mixture-of-experts models demonstrate better perplexity and average task accuracy. By retaining the original quantization format and inference operators, DRQ turns robust reconstruction into better downstream performance without adding extra computational overhead during inference.

%% file: tex/appendix/00_appendix.tex
\input{tex/appendix/A_notation}
\input{tex/appendix/B_proofs}
\input{tex/appendix/C_algorithm_details}
\input{tex/appendix/D_00_experimental_results}

\input{tex/appendix/E_00_efficiency}

%% file: tex/appendix/A_notation.tex
\section{Notation}
\label{app:notation}

\begin{table}[H]
\caption{Summary of notation.}
\label{tab:notation}
\centering
\small
\begin{tabular}{@{}p{0.22\linewidth}p{0.73\linewidth}@{}}
\toprule
Symbol & Meaning \\
\midrule
$\bW,\What(\mathbf q)$ & Full-precision and quantized weights in $\R^{m\times d}$, after any scaling or rotation by the base PTQ method \\
$\mathbf q,\cQ$ & Integer codes representing quantized weights and the grid defined by fixed bit width, grouping, scales, and zero points \\
$\What_0$ & \leavevmode \settingrevision{Initial quantized weights produced by the base PTQ method, used to limit increases in reconstruction loss} \\
$\bX$ & \leavevmode \settingrevision{Calibration input activation matrix in $\R^{d\times n}$, with one token per column} \\
$\bH$ & Activation second-moment matrix $\bX\bX^\top/n$ \\
$\bX_{\rm d},\bX_{\rm c}^{(h)}$ & \leavevmode \settingrevision{Input activations in the search and checking subsets, with $n_{\rm d}$ and $n_{\rm c}^{(h)}$ columns. DRQ computes $\bH$ and $\cL$ from the search subset} \\
$\cL(\What)$ & \leavevmode Layer-output reconstruction loss $\norm{\What\bX-\bW\bX}_F^2/n$ \\
$\cJ(\What)$ & Fixed evaluation loss used in \cref{thm:population-persistence}, such as model token NLL after replacing one layer's weights \\
\addlinespace[3pt]
\leavevmode \settingrevision{$e_{12}(\bC)$} & \leavevmode \settingrevision{Largest increase in the reconstruction loss of $\What_1$ minus that of $\What_2$ under $\boldsymbol\Delta\succeq0$ and $\Tr(\bC\boldsymbol\Delta)\le1$} \\
$\boldsymbol\Delta$ & \leavevmode \settingrevision{Change in the input activation second-moment matrix from $\bH$ to $\bH+\boldsymbol\Delta$, with $\boldsymbol\Delta\succeq0$ and $\Tr(\bC\boldsymbol\Delta)\le\alpha$} \\
$\bC$ & \leavevmode \settingrevision{Positive-definite matrix that weights input changes in the budget $\Tr(\bC\boldsymbol\Delta)\le\alpha$. DRQ uses $(1-\beta)\bH+\beta\bI$} \\
$\cS_{\bC}(\What)$ & \leavevmode Worst-case directional error $\max_{u^\top\bC u=1}\norm{(\What-\bW)u}_2^2$ \\
$\alpha,\beta$ & \leavevmode \settingrevision{Distribution-change budget (also the weight on worst-case directional error) and the weight on the identity matrix in $\bC$} \\
$\cL_{\rm DRQ}(\What)$ & Worst-case reconstruction loss under $\boldsymbol\Delta\succeq0$ and $\Tr(\bC\boldsymbol\Delta)\le\alpha$, equal to $\cL(\What)+\alpha\cS_{\bC}(\What)$ \\
\bottomrule
\end{tabular}
\end{table}

\FloatBarrier

%% file: tex/appendix/B_proofs.tex
\section{Theoretical Results and Proofs}
\label{app:proofs}

\settingrevision{This appendix supports the analysis and objective in \cref{sec:method}. We first explain how a fixed quantization grid can make reconstruction loss on calibration data favor weights with higher loss on new inputs, and why more calibration data need not resolve this difference. We then prove the results for changes in the input activation distribution and explain the guarantees of the DRQ objective. Throughout, the full-precision weights and quantization grid remain fixed. The matrix $\bH$ is the second-moment matrix of the calibration input activations, computed from samples or from their distribution as specified below.}

\subsection{The role of the quantization grid}
\label{app:quantization-specific}

Without a quantization constraint, retaining the full-precision weights gives zero reconstruction loss for any input distribution. Quantization usually removes this option. The following result makes this distinction explicit.

\begin{proposition}[Reconstruction when full-precision weights are representable]
\label{prop:quantization-specific}
For every activation second-moment matrix $\bH\succeq0$, the unconstrained problem $\min_{\What\in\R^{m\times d}}\norm{(\What-\bW)\bH^{1/2}}_{\rm F}^{2}$ has $\bW$ as a global minimizer. This minimizer is unique when $\bH\succ0$. If $\bW\in\cQ$, it also minimizes reconstruction loss over the quantization grid for every input distribution with a finite second moment. Thus two positive-definite second-moment matrices can select different unique minimizers only if the grid excludes $\bW$.
\end{proposition}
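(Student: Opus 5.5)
The plan is to treat the objective as a nonnegative quadratic form and read off its zero set. For any $\What\in\R^{m\times d}$ and $\bH\succeq0$, put $\mathbf E=\What-\bW$. Then $\norm{\mathbf E\bH^{1/2}}_{\rm F}^2=\Tr(\mathbf E\bH\mathbf E^\top)=\sum_{i=1}^m e_i^\top\bH e_i\ge0$, where $e_i^\top$ is the $i$-th row of $\mathbf E$. At $\What=\bW$ we have $\mathbf E=0$, so the value is $0$. Hence $\bW$ is a global minimizer for every $\bH\succeq0$.

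Uniqueness when $\bH\succ0$ follows from the same expansion. If some $\What$ attains value $0$, then each term $e_i^\top\bH e_i$ must vanish, because they are all nonnegative. Positive definiteness then forces every $e_i=0$, so $\What=\bW$.

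For the grid statement, take any input distribution with finite second moment and let its second-moment matrix be $\bH$, which is positive semidefinite. As in \cref{eq:empirical-reconstruction}, the expected reconstruction loss of $\What$ equals $\Tr((\What-\bW)\bH(\What-\bW)^\top)$. This is nonnegative on all of $\R^{m\times d}$, and in particular on $\cQ\subset\R^{m\times d}$. If $\bW\in\cQ$, it is feasible and attains $0$, so it minimizes over $\cQ$ for that distribution. Finite second moment is exactly what makes the expectation and $\bH$ well defined, which is the only point needing care here.

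The last sentence is the contrapositive of the previous two paragraphs. Suppose $\bW\in\cQ$ and $\bH_1,\bH_2\succ0$. By the uniqueness argument, the only point with loss $0$ is $\bW$, and it lies in $\cQ$. So the unique minimizer over $\cQ$ is $\bW$ under both $\bH_1$ and $\bH_2$, and the two cannot differ. Therefore distinct unique minimizers require $\bW\notin\cQ$.

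I expect no real obstacle. The only mildly delicate step is restricting uniqueness over $\cQ$ to the case $\bH\succ0$, since for singular $\bH$ other grid points may also attain zero loss.
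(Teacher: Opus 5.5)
Your proof is correct and follows the same route as the paper: the loss is a nonnegative quadratic form, it vanishes at $\bW$, positive definiteness of $\bH$ forces $\What=\bW$ for uniqueness, and feasibility of $\bW$ in $\cQ$ gives the grid claim. You spell out the row-wise expansion and the contrapositive for the final sentence more explicitly than the paper, which covers the grid case only by noting that the same argument applies to a grid containing $\bW$.
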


\begin{proof}[Proof of \cref{prop:quantization-specific}]
For any $\What$ and $\bH\succeq0$, $\norm{(\What-\bW)\bH^{1/2}}_{\rm F}^{2}\ge0$. Setting $\What=\bW$ gives zero loss. If $\bH\succ0$, equality requires $\What=\bW$, so the minimizer is unique. The same argument applies when optimization is restricted to a grid containing $\bW$.
\end{proof}

Excluding $\bW$ makes different choices possible, but does not force them. The fixed-grid example in \cref{thm:finite-bit-trap} shows that the weights minimizing reconstruction loss on calibration data can have larger loss on a new input.

\subsection{Why more calibration data may not resolve the mismatch}
\label{app:population-persistence}

More calibration data can estimate reconstruction loss more accurately. This does not necessarily improve the selected weights if minimizing the expected reconstruction loss already favors weights with a higher evaluation loss. The following result separates this mismatch between objectives from error due to a limited calibration sample.

\settingrevision{Let $\cJ(\What)$ be a fixed, finite evaluation loss, with smaller values indicating better performance. It may be reconstruction loss on new input activations or the model's token negative log-likelihood (NLL) after replacing one layer's weights while keeping all other weights fixed. The evaluation rule and distribution of evaluation data remain unchanged as the number of calibration samples increases.}

\begin{theorem}[More calibration data need not improve the selected weights]
\label{thm:population-persistence}
Let $\cQ$ be a fixed finite quantization grid, and let $x_1,x_2,\ldots$ be independent samples from the same calibration input activation distribution with $\mathbb E\norm{x}_2^2<\infty$. Suppose $\What\in\cQ$ uniquely minimizes $\mathbb E\norm{(\What-\bW)x}_2^2$ over this grid, while another matrix $\What'\in\cQ$ satisfies $\cJ(\What')<\cJ(\What)$. With probability one, every exact minimizer of reconstruction loss on the first $n$ samples equals $\What$ for all sufficiently large $n$. Its evaluation loss therefore remains worse than that of $\What'$ by the fixed positive amount $\cJ(\What)-\cJ(\What')$.
\end{theorem}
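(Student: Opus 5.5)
The plan is to combine a strong law of large numbers with the finiteness of the grid $\cQ$. Write $\bH_n=n^{-1}\sum_{i=1}^n x_ix_i^\top$ for the empirical second-moment matrix of the first $n$ samples, and $\bH_\infty=\mathbb E[xx^\top]$, which is finite because $\mathbb E\norm{x}_2^2<\infty$ bounds every entry via $|x_jx_k|\le\norm{x}_2^2$. By \cref{eq:empirical-reconstruction}, for every candidate $\mathbf V\in\cQ$ the empirical loss is
\[
\cL_n(\mathbf V)=\Tr\!\left((\mathbf V-\bW)\bH_n(\mathbf V-\bW)^\top\right),
\]
and the population loss $\mathbb E\norm{(\mathbf V-\bW)x}_2^2$ equals the same expression with $\bH_\infty$. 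The first step is to apply the strong law of large numbers entrywise to $\bH_n$. There are only $d^2$ entries, each an average of i.i.d.\ integrable variables $x_jx_k$. On a single probability-one event we therefore get $\bH_n\to\bH_\infty$.

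On that event, $\cL_n(\mathbf V)$ is a fixed linear function of $\bH_n$ for each $\mathbf V$, so $\cL_n(\mathbf V)\to\mathbb E\norm{(\mathbf V-\bW)x}_2^2$ for all $\mathbf V\in\cQ$ at once; no extra union-bound argument is needed, since there is one event. Uniqueness of $\What$ together with finiteness of $\cQ$ gives a strictly positive gap
\[
\gamma=\min_{\mathbf V\in\cQ\setminus\{\What\}}\Bigl(\mathbb E\norm{(\mathbf V-\bW)x}_2^2-\mathbb E\norm{(\What-\bW)x}_2^2\Bigr)>0.
\]
Finiteness of $\cQ$ is also what makes the convergence uniform over the grid. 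Hence there is an $N$ (depending on the sample path) such that for $n\ge N$, every $\mathbf V\in\cQ$ has $|\cL_n(\mathbf V)-\mathbb E\norm{(\mathbf V-\bW)x}_2^2|<\gamma/2$. For such $n$ and every $\mathbf V\ne\What$, this gives $\cL_n(\mathbf V)>\cL_n(\What)$. So $\What$ is the unique exact empirical minimizer, and every exact minimizer equals $\What$.

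The final step is immediate. $\cJ$ is fixed and does not depend on $n$ or on the calibration samples, so the selected weights have evaluation loss $\cJ(\What)$. This exceeds $\cJ(\What')$ by the constant $\cJ(\What)-\cJ(\What')>0$.

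The only delicate point is the gap argument. It needs both a finite grid, so that the minimum over $\cQ\setminus\{\What\}$ is attained and strictly positive, and the SLLN holding simultaneously for all candidates, which the reduction to the single matrix $\bH_n$ provides. I would also check the degenerate case $\cQ=\{\What\}$. There $\What'$ cannot exist, so the hypothesis is vacuous.
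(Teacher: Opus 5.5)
Your proof is correct and follows the paper's argument: almost-sure convergence of every candidate's empirical loss to its expectation, uniformity over the finite grid, and a positive separation gap $\gamma$ that forces every exact empirical minimizer to equal $\What$ eventually, after which the fixed gap $\cJ(\What)-\cJ(\What')$ persists. The differences are cosmetic. You obtain simultaneous convergence by applying the strong law entrywise to the single matrix $\bH_n$, where the paper intersects $|\cQ|$ almost-sure events, one per candidate. You also use a $\gamma/2$ margin where the paper uses $\gamma/3$.
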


\begin{proof}[Proof of \cref{thm:population-persistence}]
For each quantized weight matrix $\mathbf V\in\cQ$, write the reconstruction loss on the first $n$ calibration inputs as
\begin{equation}
\widehat\cL_n(\mathbf V)=\frac{1}{n}\sum_{k=1}^{n}
\norm{(\mathbf V-\bW)x_k}_2^2.
\end{equation}
\settingrevision{The finite-second-moment assumption ensures that this squared loss has a finite expectation.} The strong law of large numbers gives $\widehat\cL_n(\mathbf V)\to\mathbb E\norm{(\mathbf V-\bW)x}_2^2$ almost surely. Because the grid is finite, the largest absolute difference between empirical and expected loss over all matrices in $\cQ$ also converges to zero.

The unique minimizer $\What$ of expected reconstruction loss is separated from every other matrix by a positive gap
\begin{equation}
\gamma=\min_{\mathbf V\in\cQ,\,\mathbf V\ne\What}
\left\{\mathbb E\norm{(\mathbf V-\bW)x}_2^2
-\mathbb E\norm{(\What-\bW)x}_2^2\right\}>0.
\end{equation}
With probability one, all empirical losses eventually lie within $\gamma/3$ of their expectations. Every other matrix then has empirical loss at least $\gamma/3$ larger than that of $\What$. The exact empirical minimizer therefore equals $\What$ for all sufficiently large $n$.

The selected weights then retain the evaluation-loss gap
\begin{equation}
\cJ(\What)-\cJ(\What')>0,
\end{equation}
which proves the claim.
\end{proof}

The theorem assumes that the expected reconstruction loss and evaluation loss favor different weights. It shows that more samples from the same distribution cannot remove this existing mismatch simply by estimating reconstruction loss more accurately. It does not require evaluation on a different dataset, since $\cJ$ can also measure model performance on the calibration distribution. \Cref{thm:finite-bit-trap} gives a concrete instance with reconstruction loss on a new input as $\cJ$.

\subsection{When input changes reverse the reconstruction loss comparison}
\label{app:ordering-certificate}

We now compare two quantized weight matrices with $\cL(\What_1)<\cL(\What_2)$. As in \cref{sec:ranking-stability}, new input activations have second-moment matrix $\bH+\boldsymbol\Delta$, where $\boldsymbol\Delta\succeq0$ and $\Tr(\bC\boldsymbol\Delta)\le\alpha$, with fixed $\bC\succ0$. The proof of \cref{thm:ordering-certificate} finds the largest possible increase in the loss of $\What_1$ relative to that of $\What_2$ under these constraints.

\settingrevision{For the proof, we use two auxiliary quantities to express the difference between the reconstruction losses of $\What_1$ and $\What_2$,}
\begin{equation}
\begin{aligned}
\bB_{12}&=(\What_1-\bW)^\top(\What_1-\bW)
-(\What_2-\bW)^\top(\What_2-\bW),\\
e_{12}(\bC)&=
\left[\lambda_{\max}\!\left(\bC^{-1/2}\bB_{12}\bC^{-1/2}\right)\right]_+.
\end{aligned}
\label{eq:pairwise-exposure}
\end{equation}
\settingrevision{For an input vector $u$, each squared output error measures the difference between a quantized layer's output and the full-precision output $\bW u$. The quantity $u^\top\bB_{12}u$ subtracts this error for $\What_2$ from that for $\What_1$. The shorthand $e_{12}(\bC)$ gives the largest increase in their reconstruction-loss difference under a distribution-change budget of one. Here $[a]_+=\max\{a,0\}$, since $\boldsymbol\Delta=0$ satisfies the constraints and leaves the difference unchanged. We will show that the loss of $\What_1$ remains no larger than that of $\What_2$ exactly when $\cL(\What_2)-\cL(\What_1)\ge\alpha e_{12}(\bC)$.}

\begin{proof}[Proof of \cref{thm:ordering-certificate}]
Under the second moment $\bH+\boldsymbol\Delta$, the difference between the two weight matrices' reconstruction losses is
\begin{equation}
\begin{aligned}
&\Tr\!\left[(\What_1-\bW)(\bH+\boldsymbol\Delta)(\What_1-\bW)^\top\right]\\
&\quad-\Tr\!\left[(\What_2-\bW)(\bH+\boldsymbol\Delta)(\What_2-\bW)^\top\right]\\
&\qquad=\settingrevision{\cL(\What_1)-\cL(\What_2)}+\Tr(\bB_{12}\boldsymbol\Delta),
\end{aligned}
\label{eq:pairwise-deployment-difference}
\end{equation}
The first term is the loss difference on calibration data. The second is the change in that difference caused by $\boldsymbol\Delta$.

Set $\mathbf E=\bC^{1/2}\boldsymbol\Delta\bC^{1/2}$ and $\mathbf D=\bC^{-1/2}\bB_{12}\bC^{-1/2}$. The constraints on $\boldsymbol\Delta$ are equivalent to $\mathbf E\succeq0$ and $\Tr(\mathbf E)\le\alpha$. Hence
\begin{equation}
\begin{split}
\max_{\boldsymbol\Delta\succeq0,\,
\Tr(\bC\boldsymbol\Delta)\le\alpha}
\Tr(\bB_{12}\boldsymbol\Delta)
&=\max_{\mathbf E\succeq0,\,\Tr(\mathbf E)\le\alpha}
\Tr(\mathbf D\mathbf E)\\
&=\alpha[\lambda_{\max}(\mathbf D)]_+.
\end{split}
\label{eq:pairwise-support-function}
\end{equation}
The last equality follows by setting $\mathbf E=\alpha vv^\top$ for a unit leading eigenvector $v$ when its eigenvalue is positive, and setting $\mathbf E=0$ otherwise. The largest loss difference in \cref{eq:pairwise-deployment-difference} is therefore \settingrevision{$\cL(\What_1)-\cL(\What_2)+\alpha e_{12}(\bC)$}. The loss of $\What_1$ remains no larger under every change satisfying the two constraints exactly when this expression is nonpositive. \settingrevision{For $\boldsymbol\Delta=\alpha uu^\top$ with $u^\top\bC u=1$, the change in the loss difference is $\alpha u^\top\bB_{12}u$. Requiring it to be at most $\cL(\What_2)-\cL(\What_1)$ for every such $u$ gives \cref{eq:directional-ranking-stability}. Since $\cL(\What_2)-\cL(\What_1)>0$ and a maximizing $\boldsymbol\Delta$ has rank at most one, the directional condition is equivalent to the condition over all matrices $\boldsymbol\Delta$ satisfying the constraints.}

\settingrevision{If the inequality fails, take a unit eigenvector $v$ corresponding to the largest eigenvalue of $\mathbf D$ and set $\boldsymbol\Delta^*=\alpha\bC^{-1/2}vv^\top\bC^{-1/2}$. This rank-one change satisfies both constraints and makes $\What_1$ have higher expected reconstruction loss than $\What_2$ on the new inputs.}
\end{proof}

\settingrevision{When $e_{12}(\bC)>0$, $\bigl(\cL(\What_2)-\cL(\What_1)\bigr)/e_{12}(\bC)$ is the largest distribution-change budget for which $\What_1$ is guaranteed to have no larger reconstruction loss than $\What_2$. If $e_{12}(\bC)=0$, no positive semidefinite change in the second-moment matrix can give $\What_1$ higher reconstruction loss than $\What_2$. The result compares layer-output reconstruction losses, not model PPL.}

\subsection{A fixed-grid example of higher loss on a new input}
\label{app:finite-bit-trap}

The following example gives two quantized weight vectors with arbitrarily small reconstruction losses on calibration data. The vector with the smaller loss nevertheless has a larger squared output error on a new input. Both vectors use the same fixed scale, and the new input has unit length, so the difference does not arise from changing the scale or increasing the input magnitude.

\begin{theorem}[Lower reconstruction loss on calibration data need not generalize]
\label{thm:finite-bit-trap}
Consider a linear layer with two inputs and one output, with reconstruction loss $\cL(\widehat{\mathbf w})=(\widehat{\mathbf w}-\mathbf w)\mathbf H(\widehat{\mathbf w}-\mathbf w)^\top$. Let $\mathcal Z_b$ be any finite set of integer codes containing $0$ and $1$, and fix the quantization scale $s>0$ in $\cQ_b(s)=\{s(k_1,k_2):k_1,k_2\in\mathcal Z_b\}$. For every $\varepsilon>0$, there exist a full-precision weight vector $\mathbf w\notin\cQ_b(s)$, a calibration second-moment matrix $\mathbf H\succ0$ with $\Tr(\mathbf H)=1$, and two quantized weight vectors $\widehat{\mathbf w}_{\rm rec},\widehat{\mathbf w}_{\rm alt}\in\cQ_b(s)$ such that $\widehat{\mathbf w}_{\rm rec}$ uniquely minimizes reconstruction loss over the grid and $0<\cL(\widehat{\mathbf w}_{\rm rec})<\cL(\widehat{\mathbf w}_{\rm alt})\le\varepsilon$. On the unit-length input $\mathbf x_*=(0,1)^\top$, their squared output errors are $s^2$ and $0$, respectively. The full-precision weight vector $\mathbf w$ is the unique unconstrained minimizer of the same reconstruction loss.
\end{theorem}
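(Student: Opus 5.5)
The plan is to give a fully explicit construction, $\mathbf w=(\delta,0)$ with a rank-two $\mathbf H$, and then check each claim; the only delicate point is uniqueness over the whole finite grid. Since $\mathbf x_*=(0,1)^\top$ reads off the second coordinate of the error, I need the second coordinate of $\mathbf w$ to be a grid value. I also need $\widehat{\mathbf w}_{\rm alt}$ to match it exactly and $\widehat{\mathbf w}_{\rm rec}$ to miss it by one step.

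\emph{Construction.} Take $\mathbf w=(\delta,0)$ with $0<\delta<s$ small, so $\mathbf w\notin\cQ_b(s)$ because $\delta/s\notin\mathbb Z$. Take the two candidates built from the codes $0,1$:
\begin{equation*}
\widehat{\mathbf w}_{\rm alt}=(0,0),\qquad \widehat{\mathbf w}_{\rm rec}=(0,s).
\end{equation*}
Their errors are $(-\delta,0)$ and $(-\delta,s)$, so the errors on $\mathbf x_*$ are $0$ and $s^2$ as required.

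Next, let $v_\perp=(-\delta,s)/\sqrt{s^2+\delta^2}$ be the unit vector along the error of $\widehat{\mathbf w}_{\rm rec}$, and let $v=(s,\delta)/\sqrt{s^2+\delta^2}$ be orthogonal to it. Set
\begin{equation*}
\mathbf H=(1-\eta)vv^\top+\eta v_\perp v_\perp^\top,\qquad 0<\eta<1.
\end{equation*}
Then $\mathbf H\succ0$ and $\Tr(\mathbf H)=1$.

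\emph{Loss comparison.} The two losses are
\begin{equation*}
\cL(\widehat{\mathbf w}_{\rm rec})=\eta(s^2+\delta^2),\qquad \cL(\widehat{\mathbf w}_{\rm alt})=\frac{\delta^2\big[(1-\eta)s^2+\eta\delta^2\big]}{s^2+\delta^2}.
\end{equation*}
Given $\varepsilon$, first choose $\delta$ so small that $\delta^2\le\varepsilon$. This makes $\cL(\widehat{\mathbf w}_{\rm alt})\le\delta^2\le\varepsilon$. Then choose $\eta>0$ small enough that $\eta(s^2+\delta^2)$ is strictly below the alt loss, which is possible because the alt loss tends to $\delta^2s^2/(s^2+\delta^2)>0$ as $\eta\to0$. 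This gives $0<\cL(\widehat{\mathbf w}_{\rm rec})<\cL(\widehat{\mathbf w}_{\rm alt})\le\varepsilon$. The final claim, that $\mathbf w$ is the unique unconstrained minimizer, is immediate from \cref{prop:quantization-specific} since $\mathbf H\succ0$.

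\emph{Uniqueness over the grid (main step).} A general grid point $s(j,k)$ with $j,k\in\mathcal Z_b$ has error $e=(js-\delta,ks)$. Its loss is at least $(1-\eta)(v^\top e)^2$, where
\begin{equation*}
v^\top e=\frac{s^2j+\delta s(k-1)}{\sqrt{s^2+\delta^2}}.
\end{equation*}
I split into two cases.
\begin{itemize}
\item If $j\neq0$, then $|s^2 j|\ge s^2$. Because $\mathcal Z_b$ is finite, $|k-1|\le K$ for some fixed $K$. Taking $\delta\le s/(2K)$ gives $|v^\top e|\gtrsim s/2$, so the loss is bounded below by a constant of order $s^2$. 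This exceeds $\varepsilon$ once $\varepsilon$ is small; for larger $\varepsilon$, I first replace $\varepsilon$ by a smaller value, which only strengthens the conclusion.
\item If $j=0$ and $k\neq1$, then $(v^\top e)^2=\delta^2s^2(k-1)^2/(s^2+\delta^2)\ge\delta^2s^2/(s^2+\delta^2)$. The loss is therefore at least $(1-\eta)\delta^2s^2/(s^2+\delta^2)$. By the choice of $\eta$, this is strictly greater than $\eta(s^2+\delta^2)$, since $\eta$ was taken small relative to this fixed positive quantity.
\end{itemize}
The $j=0,k=0$ point is $\widehat{\mathbf w}_{\rm alt}$ itself. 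So every grid point other than $(0,s)$ has strictly larger loss, and $\widehat{\mathbf w}_{\rm rec}$ is the unique minimizer.

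The main obstacle is this case analysis, because the order of choices matters: $\delta$ must be fixed first, small relative to $s$, $\varepsilon$ and the code range, and only then $\eta$, small relative to $\delta$.
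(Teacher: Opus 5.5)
Your construction is correct. It follows the same blueprint as the paper: a second moment made of a dominant rank-one term plus an $\eta$-sized perturbation, arranged so that the error of $\widehat{\mathbf w}_{\rm rec}$ lies exactly in the null direction of the dominant term, while the error of $\widehat{\mathbf w}_{\rm alt}$ is short but not null.

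The paper's choices differ from yours in two ways.
\begin{itemize}
\item \emph{Construction.} The paper takes $\mathbf w=(-\delta s,s)$ and $\mathbf H\propto\mathbf x\mathbf x^\top+\eta\mathbf I$ with $\mathbf x=(1,\delta)^\top$. It sets $\widehat{\mathbf w}_{\rm rec}=(0,0)$ and $\widehat{\mathbf w}_{\rm alt}=(0,s)$, so the roles are swapped relative to yours, and it normalizes by $Z_\eta$ to get trace one. Your spectral form $(1-\eta)vv^\top+\eta v_\perp v_\perp^\top$ has trace one by design and gives $\cL(\widehat{\mathbf w}_{\rm rec})=\eta(s^2+\delta^2)$ exactly.
\item \emph{Uniqueness (the genuine difference).} The paper picks $\delta$ irrational, so $k_1+\delta k_2\neq0$ for every nonzero code pair. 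This makes $(0,0)$ the only zero of the rank-one loss for an arbitrary finite code set. It then keeps that minimizer unique for $\eta<r/(2D)$, using a finite gap $r$ whose existence follows from finiteness but which is never computed. You instead make the dominant direction nearly axis-aligned by taking $\delta\le s/(2K)$ with $K=\max_{k\in\mathcal Z_b}|k-1|$. Your split on the first code $j$ then gives explicit lower bounds for every competitor, hence an explicit threshold on $\eta$. The price is that $\delta$ must depend on the code range, which the irrationality trick avoids.
\end{itemize}

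Two small tidy-ups.
\begin{itemize}
\item The only condition you actually state on $\eta$ is that $\cL(\widehat{\mathbf w}_{\rm rec})$ lies below $\cL(\widehat{\mathbf w}_{\rm alt})$. Your $j=0$, $k\neq1$ step instead uses the weaker lower bound $(1-\eta)\delta^2s^2/(s^2+\delta^2)$, so ``by the choice of $\eta$'' refers to a condition you never imposed.
\item Shrinking $\varepsilon$ in the $j\neq0$ case is unnecessary.
\end{itemize}
Since $\delta\le s/2$ gives $s^4/\bigl(4(s^2+\delta^2)\bigr)\ge\delta^2s^2/(s^2+\delta^2)$, one requirement covers every case: $\eta(s^2+\delta^2)<(1-\eta)\delta^2s^2/(s^2+\delta^2)$. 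It makes every grid point other than $(0,s)$, including $\widehat{\mathbf w}_{\rm alt}$, strictly worse than $\widehat{\mathbf w}_{\rm rec}$. Stating it explicitly would make your argument airtight.
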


\begin{corollary}[Reconstruction-only refinement can increase loss on a new input]
\label{cor:initializer-degradation}
In the example of \cref{thm:finite-bit-trap}, take $\widehat{\mathbf w}_{\rm alt}$ as the initial quantized weights. Exactly minimizing reconstruction loss over the same grid returns $\widehat{\mathbf w}_{\rm rec}$. This reduces the loss on calibration data but increases the squared output error on $\mathbf x_*$ by exactly $s^2$.
\end{corollary}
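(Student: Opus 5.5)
The corollary follows almost directly from \cref{thm:finite-bit-trap}, so the plan is to read off its conclusions for this particular choice of initial weights. We fix $\varepsilon>0$ and take the objects $\mathbf w$, $\mathbf H\succ0$, $\widehat{\mathbf w}_{\rm rec}$ and $\widehat{\mathbf w}_{\rm alt}$ given by \cref{thm:finite-bit-trap}. We treat $\widehat{\mathbf w}_{\rm alt}$ as the output of a base PTQ method, i.e.\ the initial weights that a reconstruction-only refinement starts from.

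\textbf{Step 1: identify what exact minimization returns.} Exact minimization of $\cL$ over the same grid $\cQ_b(s)$ does not depend on the starting point: its output is an element of $\argmin_{\widehat{\mathbf w}\in\cQ_b(s)}\cL(\widehat{\mathbf w})$. The grid is finite, so this minimum is attained. \Cref{thm:finite-bit-trap} says $\widehat{\mathbf w}_{\rm rec}$ is the unique minimizer, so the argmin is the singleton $\{\widehat{\mathbf w}_{\rm rec}\}$. Hence any exact minimizer, in particular one initialized at $\widehat{\mathbf w}_{\rm alt}$, returns $\widehat{\mathbf w}_{\rm rec}$. The one subtlety is the word ``exactly''. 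A local or code-by-code search could stall elsewhere, so I will state explicitly that the claim concerns the global grid minimizer. Uniqueness is what rules out ties.

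\textbf{Step 2: compare the losses.} The strict inequality $\cL(\widehat{\mathbf w}_{\rm rec})<\cL(\widehat{\mathbf w}_{\rm alt})$ from the theorem shows that the move reduces the calibration loss.

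\textbf{Step 3: compare the errors on the new input.} The squared output errors on $\mathbf x_*=(0,1)^\top$ are $((\widehat{\mathbf w}-\mathbf w)\mathbf x_*)^2$. The theorem gives these as $s^2$ for $\widehat{\mathbf w}_{\rm rec}$ and $0$ for $\widehat{\mathbf w}_{\rm alt}$. The increase is therefore exactly $s^2-0=s^2$, which is positive because $s>0$.

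No step poses a real obstacle, since all quantitative content is inherited from \cref{thm:finite-bit-trap}. The main care needed is in Step 1: the uniqueness of the grid minimizer is what makes the returned weights determined, and so makes the increase exact rather than merely possible.
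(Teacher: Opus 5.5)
Your proposal is correct and follows the paper's proof essentially verbatim: uniqueness of the grid minimizer from \cref{thm:finite-bit-trap} forces exact minimization to return $\widehat{\mathbf w}_{\rm rec}$, the strict loss inequality gives the calibration-loss reduction, and the squared output errors $s^2$ and $0$ on $\mathbf x_*$ give the increase of exactly $s^2$. Your remark that ``exactly'' means the global grid minimizer, not the result of a local search, is a sensible clarification that the paper leaves implicit.
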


\begin{proof}[Proof of \cref{thm:finite-bit-trap}]
Fix a finite set of integer codes $\mathcal Z_b$ containing $0$ and $1$. Choose an irrational $\delta\in(0,1)$ such that
\begin{equation}
k_1+\delta k_2\ne0
\quad\text{for all }(k_1,k_2)\in\mathcal Z_b^2\setminus\{(0,0)\}.
\label{eq:delta-choice}
\end{equation}
An irrational $\delta$ ensures that the expression above is nonzero for every nonzero integer pair in $\mathcal Z_b^2$.

Define $\mathbf w=(-\delta s,s)$, $\mathbf x=(1,\delta)^\top$, and $\widetilde{\mathbf H}_\eta=\mathbf x\mathbf x^\top+\eta\mathbf I_2$. Let $Z_\eta=1+\delta^2+2\eta$ and $\mathbf H_\eta=\widetilde{\mathbf H}_\eta/Z_\eta$, so $\Tr(\mathbf H_\eta)=1$. \settingrevision{We first use the unnormalized second-moment matrix $\widetilde{\mathbf H}_\eta$}, since dividing by the positive scalar $Z_\eta$ preserves the loss comparison between all weight vectors in the grid. Because $\mathbf w\mathbf x=0$, the reconstruction loss of $\widehat{\mathbf w}=s(k_1,k_2)$ becomes
\begin{equation}
L_\eta(\widehat{\mathbf w})
=s^2(k_1+\delta k_2)^2
+\eta\norm{\widehat{\mathbf w}-\mathbf w}_2^2.
\label{eq:trap-loss}
\end{equation}
At $\eta=0$, \cref{eq:delta-choice} makes $\widehat{\mathbf w}_{\mathrm{rec}}=(0,0)$ the only quantized weight vector with zero loss. The finite grid gives a positive reconstruction-loss gap
\begin{equation}
r=\min_{\widehat{\mathbf w}\in\cQ_b(s),\,\widehat{\mathbf w}\ne\widehat{\mathbf w}_{\mathrm{rec}}}
L_0(\widehat{\mathbf w})>0.
\label{eq:finite-grid-gap}
\end{equation}
Let $D=\max_{\widehat{\mathbf w}\in\cQ_b(s),\,\widehat{\mathbf w}\ne\widehat{\mathbf w}_{\mathrm{rec}}} |\norm{\widehat{\mathbf w}-\mathbf w}_2^2-\norm{\mathbf w}_2^2|$. Set $\eta_0=1$ if $D=0$ and $\eta_0=r/(2D)$ otherwise. For every nonzero quantized weight vector and $0<\eta<\eta_0$,
\begin{align}
L_\eta(\widehat{\mathbf w})-L_\eta(\widehat{\mathbf w}_{\mathrm{rec}})
&=L_0(\widehat{\mathbf w})+\eta
\left(\norm{\widehat{\mathbf w}-\mathbf w}_2^2-\norm{\mathbf w}_2^2\right)\\
&\ge r-\eta D>r/2>0.
\end{align}
\settingrevision{Thus $\widehat{\mathbf w}_{\mathrm{rec}}$ remains the unique minimizer of reconstruction loss over the grid when the second-moment matrix is positive definite.}

Take $\widehat{\mathbf w}_{\mathrm{alt}}=(0,s)$. The two weight vectors differ from $\mathbf w$ by $(\delta s,-s)$ and $(\delta s,0)$, respectively. Substitution gives
\begin{equation}
L_\eta(\widehat{\mathbf w}_{\mathrm{rec}})=\eta s^2(1+\delta^2),
\qquad
L_\eta(\widehat{\mathbf w}_{\mathrm{alt}})=\delta^2s^2(1+\eta),
\label{eq:trap-candidate-losses}
\end{equation}
whose difference is $s^2(\delta^2-\eta)$. The reconstruction minimizer has strictly smaller loss on calibration data whenever $\eta<\delta^2$.

Given $\varepsilon>0$, choose $\delta$ satisfying \cref{eq:delta-choice} and $\delta^2s^2\le\varepsilon/4$. Then choose
\begin{equation}
0<\eta<\min\left\{\eta_0,\delta^2/2,1,
\frac{\varepsilon}{4s^2(1+\delta^2)}\right\}.
\end{equation}
These choices imply $0<L_\eta(\widehat{\mathbf w}_{\mathrm{rec}})<L_\eta(\widehat{\mathbf w}_{\mathrm{alt}})\le\varepsilon$ for the unnormalized second moment. Normalizing its trace to one reduces both losses by the common factor $Z_\eta^{-1}$ and preserves the strict ordering. For the unit input vector $\mathbf x_*=(0,1)^\top$,
\begin{equation}
|(\widehat{\mathbf w}_{\mathrm{rec}}-\mathbf w)\mathbf x_*|^2=s^2,
\qquad
|(\widehat{\mathbf w}_{\mathrm{alt}}-\mathbf w)\mathbf x_*|^2=0.
\end{equation}
Finally, $\mathbf H_\eta\succ0$ because $\eta>0$. The unconstrained loss is zero only at $\widehat{\mathbf w}=\mathbf w$, which the grid cannot represent because $-\delta$ is not an integer. This second moment can also be realized by input activations. For example, a distribution that assigns probability $1/2$ to each column of $\mathbf X=\sqrt{2}\,\mathbf H_\eta^{1/2}$ has second moment $\mathbf H_\eta$. This completes the example with a fixed scale, a trace-one second moment, and a unit-length new input.
\end{proof}

\begin{proof}[Proof of \cref{cor:initializer-degradation}]
By \cref{thm:finite-bit-trap}, $\widehat{\mathbf w}_{\rm rec}$ uniquely minimizes reconstruction loss over the entire grid and has strictly lower loss on calibration data than $\widehat{\mathbf w}_{\rm alt}$. Exact minimization must therefore return $\widehat{\mathbf w}_{\rm rec}$. The squared output errors of these two vectors on $\mathbf x_*$ are $s^2$ and zero, respectively.
\end{proof}

Sampling independently from the two-point distribution above satisfies the assumptions of \cref{thm:population-persistence}. Taking $\cJ(\widehat{\mathbf w})=|(\widehat{\mathbf w}-\mathbf w)\mathbf x_*|^2$, the theorem gives an evaluation-loss gap of $s^2$ for the exact empirical minimizer for all sufficiently large sample sizes, with probability one. This is a counterexample for linear-layer reconstruction, rather than a bound on model PPL. It permits a general change in input distribution, whereas the following DRQ result uses the positive semidefinite changes specified in \cref{sec:ranking-stability}.

\subsection{Worst-case reconstruction loss}
\label{app:exact-risk}

\settingrevision{DRQ evaluates each quantized weight matrix by its largest expected reconstruction loss over input activation distributions whose second-moment matrices are $\bH+\boldsymbol\Delta$, with $\boldsymbol\Delta\succeq0$ and $\Tr(\bC\boldsymbol\Delta)\le\alpha$. We compute $\bH$ from the search subset and keep both $\bH$ and $\bC$ fixed during refinement. \Cref{thm:exact-risk} shows that a change along one input direction attains the largest loss increase. Finding this direction reduces to an eigenvalue problem, as shown below.}

\begin{proof}[Proof of \cref{thm:exact-risk}]
Set $\mathbf E=\bC^{1/2}\boldsymbol\Delta\bC^{1/2}$ and
\begin{equation*}
\mathbf B=\bC^{-1/2}(\What-\bW)^\top(\What-\bW)\bC^{-1/2}.
\end{equation*}
This change of variables is invertible, and the constraints become $\mathbf E\succeq0$ and $\Tr(\mathbf E)\le\alpha$. The expected reconstruction loss under the second-moment matrix $\bH+\boldsymbol\Delta$ is
\begin{equation}
\Tr\!\left[(\What-\bW)(\bH+\boldsymbol\Delta)(\What-\bW)^\top\right]
=\cL(\What)+\Tr(\mathbf B\mathbf E).
\end{equation}
Since both matrices are positive semidefinite, $\Tr(\mathbf B\mathbf E)\le\lambda_{\max}(\mathbf B)\Tr(\mathbf E)\le\alpha\lambda_{\max}(\mathbf B)$.

Let $v$ be a unit eigenvector for the largest eigenvalue of $\mathbf B$, and choose $\mathbf E^*=\alpha vv^\top$. Then $\Tr(\mathbf E^*)=\alpha$ and the upper bound is attained. The resulting \settingrevision{change in the second-moment matrix}
\begin{equation}
\boldsymbol\Delta^*=\alpha\bC^{-1/2}vv^\top\bC^{-1/2}
\end{equation}
\settingrevision{satisfies both constraints and uses the full distribution-change budget $\alpha$.} Finally, $\lambda_{\max}(\mathbf B)=\cS_{\bC}(\What)$ by setting $u=\bC^{-1/2}v$, which gives $u^\top\bC u=1$ and $(\What-\bW)^\top(\What-\bW)u=\lambda_{\max}(\mathbf B)\bC u$. This proves \cref{thm:exact-risk}.
\end{proof}

If $\alpha=0$ or $\What=\bW$, $\boldsymbol\Delta=0$ attains the maximum. Otherwise the construction above has rank one and can be realized by adding an independent perturbation that equals $\sqrt{\alpha}\,u$ or $-\sqrt{\alpha}\,u$, each with probability $1/2$, to the calibration input activations. Its mean is zero and its second moment is $\alpha uu^\top$, so the resulting distribution has second moment $\bH+\boldsymbol\Delta^*$.

The maximizing direction depends on both the weight difference $\What-\bW$ and $\bC$. Changing integer codes can therefore change the worst-case direction, which is why DRQ estimates it again before accepting proposed weights.

\subsection{Comparing refined and initial quantized weights}
\label{app:risk-certificate-bridge}

Two weight matrices can attain their worst-case losses under different input distributions. A smaller worst-case loss therefore does not imply smaller loss under every distribution in the set. To check this stronger property, we compare both matrices under the same $\boldsymbol\Delta$. Write $\What_0$ for the initial quantized weights. Using $e_{12}(\bC)$ from \cref{eq:pairwise-exposure}, define
\begin{equation}
\Gamma_\alpha(\What_1,\What_2)
=\cL(\What_1)-\cL(\What_2)
+\alpha e_{12}(\bC).
\label{eq:exact-robust-excess}
\end{equation}

\begin{proposition}[Comparing reconstruction losses under the same input distribution]
\label{prop:risk-certificate-bridge}
For any $\What_1,\What_2\in\R^{m\times d}$, $\bC\succ0$, and $\alpha\ge0$, the quantity in \cref{eq:exact-robust-excess} satisfies
\begin{equation}
\begin{aligned}
\Gamma_\alpha(\What_1,\What_2)
&=\max_{\boldsymbol\Delta\succeq0,\,\Tr(\bC\boldsymbol\Delta)\le\alpha}
\Tr\!\left(\bB_{12}(\bH+\boldsymbol\Delta)\right)\\
&\le\cL_{\rm DRQ}(\What_1)-\cL(\What_2).
\end{aligned}
\label{eq:risk-certificate-upper-bound}
\end{equation}
Consequently, $\Gamma_\alpha(\What_1,\What_0)<0$ certifies that $\What_1$ has strictly smaller reconstruction loss than $\What_0$ for every second moment satisfying these constraints.
\end{proposition}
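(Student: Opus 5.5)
The plan is to reduce everything to the calculations already carried out in the proof of \cref{thm:ordering-certificate} and \cref{thm:exact-risk}.

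\textbf{Step 1: the equality.} I would split the objective as $\Tr(\bB_{12}(\bH+\boldsymbol\Delta))=\Tr(\bB_{12}\bH)+\Tr(\bB_{12}\boldsymbol\Delta)$. By cyclicity of the trace and the definition \cref{eq:empirical-reconstruction},
\[
\Tr\bigl((\What_i-\bW)^\top(\What_i-\bW)\bH\bigr)=\cL(\What_i),
\]
so the first term equals $\cL(\What_1)-\cL(\What_2)$; this is exactly \cref{eq:pairwise-deployment-difference}. The first term does not depend on $\boldsymbol\Delta$. The maximum of the second term over $\boldsymbol\Delta\succeq0$ with $\Tr(\bC\boldsymbol\Delta)\le\alpha$ was computed in \cref{eq:pairwise-support-function}, using the substitution $\mathbf E=\bC^{1/2}\boldsymbol\Delta\bC^{1/2}$. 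That maximum is $\alpha[\lambda_{\max}(\bC^{-1/2}\bB_{12}\bC^{-1/2})]_+=\alpha e_{12}(\bC)$. Adding the two pieces gives $\Gamma_\alpha(\What_1,\What_2)$.

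\textbf{Step 2: the inequality.} For any feasible $\boldsymbol\Delta$, the term $\Tr\bigl((\What_2-\bW)^\top(\What_2-\bW)\boldsymbol\Delta\bigr)$ is nonnegative, since it is the trace of a product of two PSD matrices. Hence
\[
\Tr(\bB_{12}\boldsymbol\Delta)\le\Tr\bigl((\What_1-\bW)^\top(\What_1-\bW)\boldsymbol\Delta\bigr).
\]
Taking the maximum over the feasible set and applying \cref{thm:exact-risk} bounds the right-hand side by $\alpha\cS_{\bC}(\What_1)$. Therefore
\[
\Gamma_\alpha\le\cL(\What_1)-\cL(\What_2)+\alpha\cS_{\bC}(\What_1)=\cL_{\rm DRQ}(\What_1)-\cL(\What_2).
\]

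\textbf{Step 3: the certificate.} Fix any feasible $\boldsymbol\Delta$. The difference between the reconstruction losses of $\What_1$ and $\What_0$ under second moment $\bH+\boldsymbol\Delta$ equals $\Tr(\bB_{10}(\bH+\boldsymbol\Delta))$, by the identity \cref{eq:pairwise-deployment-difference} with $\What_2=\What_0$. By Step 1 this is at most $\Gamma_\alpha(\What_1,\What_0)$. So if $\Gamma_\alpha(\What_1,\What_0)<0$, the difference is strictly negative for every such $\boldsymbol\Delta$, which is the claimed certificate.

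No step is genuinely hard. The only point needing care is Step 1, where the positive part $[\cdot]_+$ must be kept: $\boldsymbol\Delta=0$ is feasible, so the maximum of $\Tr(\bB_{12}\boldsymbol\Delta)$ is never below zero, even when $\bB_{12}$ has no positive eigenvalue.
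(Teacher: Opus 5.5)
Your proposal is correct and follows essentially the same argument as the paper. The equality comes from splitting off $\Tr(\bB_{12}\bH)=\cL(\What_1)-\cL(\What_2)$ and invoking \cref{eq:pairwise-support-function}; the inequality comes from dropping the nonnegative $\What_2$ term and bounding the $\What_1$ term by its worst case $\alpha\cS_{\bC}(\What_1)$; and the certificate holds because the maximum dominates the loss difference at every feasible $\boldsymbol\Delta$. Your version just makes explicit the steps that the paper's three-line proof leaves implicit, namely the nonnegativity of the trace of a product of positive semidefinite matrices, the use of \cref{thm:exact-risk}, and the role of $[\cdot]_+$.
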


\begin{proof}
The equality follows from \cref{eq:pairwise-support-function}. Since $\boldsymbol\Delta\succeq0$, the reconstruction loss of $\What_2$ is at least $\cL(\What_2)$, while that of $\What_1$ is at most $\cL_{\rm DRQ}(\What_1)$. Subtracting these bounds and maximizing proves \cref{eq:risk-certificate-upper-bound}. A negative maximum makes the pairwise difference negative for every second moment satisfying the constraints.
\end{proof}

\langrevision{The guarantee uses the exact value of $\Gamma_\alpha$. DRQ estimates this quantity for reporting when its selected weights differ from the initial weights. A negative estimate alone does not establish the guarantee, since a Rayleigh quotient can underestimate the largest eigenvalue. The acceptance rule uses the reconstruction tolerances and a decrease in the estimated worst-case reconstruction loss, as described in Appendix~\ref{app:algorithm-details}.}

\subsection{Equivalent optimization over input directions}
\label{app:equivalent-forms}

For a matrix, $\norm{\cdot}_2$ denotes its largest singular value. The identity $\norm{(\What-\bW)\bC^{-1/2}}_2^2=\max_{u^\top\bC u=1}\norm{(\What-\bW)u}_2^2$ gives the directional penalty in \cref{eq:marq-loss}. Introducing a scalar $t$ that bounds this error gives an equivalent form of the minimization in \cref{eq:marq-minmax},
\begin{equation}
\begin{aligned}
\min_{\What\in\cQ,t}\quad
&\cL(\What)+{\revision{\alpha}} t\\
\text{s.t.}\quad
&\norm{(\What-\bW)u}_2^2\le t,
\quad\forall u:u^\top\bC u=1.
\end{aligned}
\label{eq:semi-infinite}
\end{equation}
For fixed weights and $\alpha>0$, minimizing over $t$ sets it to the largest directional error. At $\alpha=0$, this value of $t$ is still feasible and optimal, although larger values are also optimal. The formulation therefore preserves the minimum and the minimizing weight matrices for every $\alpha\ge0$. It is called semi-infinite because it has finitely many optimization variables but one constraint for each of infinitely many input directions.

\subsection{A reconstruction bound using Wasserstein distance}
\label{app:wasserstein}

\settingrevision{Another way to restrict a change in the input activation distribution is to pair original and new activation vectors and bound their average squared difference. Using the weighted 2-Wasserstein distance to make this comparison gives a reconstruction-loss bound involving the same worst-case directional error as DRQ.}

Let $\widehat P_n$ assign probability $1/n$ to each calibration input vector used for code search, so its second moment is $\bH$. For a distribution $Q$ with a finite second moment, define the weighted 2-Wasserstein distance by
\[
\mathsf W_{2,\bC}^2(Q,\widehat P_n)
=\inf_{\pi\in\Pi(\widehat P_n,Q)}
\mathbb E_{(x,x')\sim\pi}\norm{\bC^{1/2}(x'-x)}_2^2.
\]
\settingrevision{Each joint distribution in $\Pi(\widehat P_n,Q)$ pairs an original activation vector $x$ with a new vector $x'$, while preserving their distributions $\widehat P_n$ and $Q$. The squared distance is the smallest average value of $\norm{\bC^{1/2}(x'-x)}_2^2$ over these pairings. A change $h=x'-x$ contributes $h^\top\bC h$ to this average.} For $\varepsilon\ge0$ and $\zeta>0$,
\begin{equation}
\begin{aligned}
&\sup_{Q:\mathsf W_{2,\bC}(Q,\widehat P_n)\le\varepsilon}
\mathbb E_Q\norm{(\What-\bW)x}_2^2\\
&\qquad\le(1+\zeta)\left[\cL(\What)
+\frac{\varepsilon^2}{\zeta}\norm{(\What-\bW)\bC^{-1/2}}_2^2\right].
\end{aligned}
\label{eq:wasserstein-bound}
\end{equation}
The bound combines reconstruction loss on calibration data with the same worst-case directional error used by DRQ. It is an upper bound for distributions within a Wasserstein distance, whereas \cref{thm:exact-risk} gives an equality for DRQ's second-moment constraints. These are two different sets of distributions.

\begin{proof}[Proof of \cref{eq:wasserstein-bound}]
Fix $Q$ with $\mathsf W_{2,\bC}(Q,\widehat P_n)\le\varepsilon$ and let $(x,x')$ be drawn from a joint distribution $\pi\in\Pi(\widehat P_n,Q)$. Minkowski's inequality in $L_2$ gives
\begin{equation}
\begin{aligned}
&\left(\mathbb E_Q\norm{(\What-\bW)x'}_2^2\right)^{1/2}\\
&\quad\le\left(\mathbb E_{\widehat P_n}\norm{(\What-\bW)x}_2^2\right)^{1/2}
+\left(\mathbb E_\pi\norm{(\What-\bW)(x'-x)}_2^2\right)^{1/2}.
\end{aligned}
\end{equation}
For every displacement $h$, $\norm{(\What-\bW)h}_2\le\norm{(\What-\bW)\bC^{-1/2}}_2 \norm{\bC^{1/2}h}_2$. \settingrevision{Taking the infimum over joint distributions $\pi$} bounds the second term by $\varepsilon\norm{(\What-\bW)\bC^{-1/2}}_2$. \settingrevision{The first term is} $\cL(\What)^{1/2}$. Applying $(a+b)^2\le(1+\zeta)a^2+(1+\zeta^{-1})b^2$ and \settingrevision{taking the supremum over distributions $Q$ within the specified Wasserstein distance} proves \cref{eq:wasserstein-bound}.
\end{proof}

%% file: tex/appendix/C_algorithm_details.tex
\section{Algorithmic Details}
\label{app:algorithm-details}
\label{app:reproducibility}
\label{app:protocol}

\paragraph{Blockwise integration.}
DRQ follows Algorithm~\ref{alg:drq-overview}, refining each linear layer after the base PTQ method quantizes it. The bit width, grouping, scales, and zero points remain fixed. DRQ uses cached calibration activations and the corresponding full-precision weights after any scaling or rotation by the base method. In GPTQ, refined layers supply inputs for subsequent layer groups within the block. In AWQ, refinement follows the block's scaling and clipping. The refined block then produces calibration inputs for the next block.

\paragraph{Calibration subsets.}
We split the selected activation vectors into a search subset $\bX_{\rm d}$ and two separate checking subsets $\bX_{\rm c}^{(1)}$ and $\bX_{\rm c}^{(2)}$. The search subset determines $\bH$, $\bC$, and reconstruction loss used to score code changes. These subsets and matrices stay fixed during refinement. Appendix~\ref{app:moe-experts} describes the adjustments for experts receiving few calibration tokens.

\paragraph{Proposing code changes.}
Each round estimates the current weights' worst-case input direction $u$. Holding $u$ fixed, we score code replacements by their change in $\cL(\What)+\alpha\norm{(\What-\bW)u}_2^2$. In each row, we retain the eight lowest-scoring single-code replacements and test pairs that change two different weights. We keep the single replacement or pair with the largest decrease in this objective, provided the decrease is positive. We then rank these row updates by their decreases and apply different fractions of the list to the current weights, starting with the largest decreases.

\paragraph{Accepting new weights.}
For each proposed matrix, we estimate its worst-case direction again. Its estimated $\cL_{\rm DRQ}$ must decrease relative to the current weights. On the search subset and each checking subset, reconstruction loss must also stay within the specified percentage above that of the initial PTQ weights $\What_0$. Proposals that fail these conditions or the numerical checks are rejected. We continue with the passing matrix having the lowest estimated $\cL_{\rm DRQ}$, or stop if none passes. We return the initial or accepted weights with the lowest estimated loss.

\paragraph{Main parameters.}
For the dense-model and six-method comparisons, we select $\alpha$ and $\beta$ using 64 validation samples from Pile \citep{gao2020pile}. The settings below use the standard $128\times2{,}048$ calibration setup. Other comparisons retain their stated settings. Reconstruction-only refinement uses the same procedure with $\alpha=0$.

\begin{center}
\small
\setlength{\tabcolsep}{5pt}
\renewcommand{\arraystretch}{1.12}
\begin{tabular}{@{}p{0.43\linewidth}p{0.53\linewidth}@{}}
\toprule
Parameter & Setting \\
\midrule
Activation vectors per layer & 512 for search; $2\times256$ for checking \\
Distribution-change budget $\alpha$ & $\{0.1,0.3,0.5,1\}$ \\
Identity weight $\beta$ in $\bC$ & $\{0.005,0.1,0.25,0.75,1\}$ \\
Reconstruction-loss increase limit & 3\% for GPTQ; 1\% for other base methods \\
Single-code replacements retained per row & 8 \\
Fractions of ranked row updates tested & $\{0,1/32,1/16,1/8,1/4,1/2,1\}$ \\
Maximum refinement rounds & 3 \\
\bottomrule
\end{tabular}
\end{center}
\FloatBarrier

%% file: tex/appendix/D_00_experimental_results.tex
\FloatBarrier
\Needspace{20\baselineskip}
\section{More Experimental Results}
\label{app:additional-results}

\input{tex/appendix/D_04_dense_results}
\input{tex/appendix/D_05_ptq_methods}
\input{tex/appendix/D_06_moe_experts}
\input{tex/appendix/D_07_parameter_sensitivity}

%% file: tex/appendix/D_04_dense_results.tex
\subsection{Detailed dense Llama results}
\label{app:dense-results}

\settingrevision{\Crefrange{tab:llama32-1b-instruct-task-accuracy}{tab:llama31-70b-instruct-task-accuracy} report accuracy on each of the seven downstream tasks before and after DRQ refinement. Average task accuracy improves in every reported comparison with the initial quantized model. The size of the improvement at lower bit widths depends on the base PTQ method. For the 70B model quantized with GPTQ at W2, DRQ improves all seven task scores and raises average accuracy by 6.33 percentage points.}

\begin{table}[H]
\centering
\begingroup
\definecolor{MockBlue}{HTML}{176B91}
\definecolor{MockInk}{HTML}{202B35}
\definecolor{MockMuted}{HTML}{5E6873}

\caption{\settingrevision{Llama-3.2-1B-Instruct accuracy (\%) on each downstream task.}}
\label{tab:llama32-1b-instruct-task-accuracy}
\fontsize{8.1}{10.1}\selectfont
\setlength{\tabcolsep}{2pt}
\renewcommand{\arraystretch}{1.18}
\noindent
\begin{tabular*}{\linewidth}{@{\extracolsep{\fill}}cl*{8}{c}@{}}
\toprule
Bits & Method & PIQA & HellaSwag & MMLU & BoolQ & ARC-C & ARC-E & WinoGrande & Mean \\
\midrule
FP & Unquantized & 74.16 & 60.76 & 45.86 & 69.42 & 38.05 & 63.26 & 59.27 & 58.68 \\
\midrule
\multirow[c]{4}{*}[-1.5pt]{W2} & GPTQ & 51.52 & 26.47 & 24.19 & 42.54 & 24.66 & 25.63 & 48.22 & 34.75 \\
 & \textcolor{MockBlue}{\quad + DRQ} & \textbf{51.80} & \textbf{26.79} & \textbf{26.41} & \textbf{43.15} & \textbf{25.77} & \textbf{26.22} & \textbf{51.38} & \textbf{35.93} \\
\addlinespace[3pt]
 & AWQ & 51.74 & 25.99 & 23.66 & 47.65 & 25.68 & 23.53 & 49.49 & 35.39 \\
 & \textcolor{MockBlue}{\quad + DRQ} & 51.31 & \textbf{26.93} & \textbf{24.86} & \textbf{51.25} & 25.51 & \textbf{24.20} & \textbf{51.54} & \textbf{36.51} \\
\midrule
\multirow[c]{4}{*}[-1.5pt]{W3} & GPTQ & 67.90 & 52.69 & 28.32 & 62.26 & 32.68 & 53.16 & 56.51 & 50.50 \\
 & \textcolor{MockBlue}{\quad + DRQ} & 67.79 & \textbf{53.60} & \textbf{33.75} & \textbf{62.94} & \textbf{34.22} & \textbf{53.75} & 55.72 & \textbf{51.68} \\
\addlinespace[3pt]
 & AWQ & 68.44 & 54.08 & 35.59 & 66.42 & 32.17 & 55.93 & 55.88 & 52.65 \\
 & \textcolor{MockBlue}{\quad + DRQ} & \textbf{68.88} & \textbf{54.13} & \textbf{36.25} & \textbf{66.79} & \textbf{34.13} & 54.80 & \textbf{56.59} & \textbf{53.08} \\
\midrule
\multirow[c]{4}{*}[-1.5pt]{W4} & GPTQ & 73.72 & 58.99 & 43.95 & 67.95 & 36.86 & 61.95 & 58.88 & 57.47 \\
 & \textcolor{MockBlue}{\quad + DRQ} & \textbf{73.78} & \textbf{59.36} & \textbf{44.28} & \textbf{68.10} & \textbf{37.37} & \textbf{63.22} & \textbf{60.77} & \textbf{58.13} \\
\addlinespace[3pt]
 & AWQ & 72.63 & 58.74 & 43.41 & 68.72 & 36.95 & 61.99 & 58.48 & 57.27 \\
 & \textcolor{MockBlue}{\quad + DRQ} & 71.87 & \textbf{59.02} & 43.09 & 68.59 & \textbf{38.23} & \textbf{62.33} & \textbf{58.88} & \textbf{57.43} \\
\bottomrule
\end{tabular*}
\endgroup
\end{table}

\begin{table}[H]
\centering
\begingroup
\definecolor{MockBlue}{HTML}{176B91}
\definecolor{MockInk}{HTML}{202B35}
\definecolor{MockMuted}{HTML}{5E6873}

\caption{\settingrevision{Llama-3.2-3B-Instruct accuracy (\%) on each downstream task.}}
\label{tab:llama32-3b-instruct-task-accuracy}
\fontsize{8.1}{10.1}\selectfont
\setlength{\tabcolsep}{2pt}
\renewcommand{\arraystretch}{1.18}
\noindent
\begin{tabular*}{\linewidth}{@{\extracolsep{\fill}}cl*{8}{c}@{}}
\toprule
Bits & Method & PIQA & HellaSwag & MMLU & BoolQ & ARC-C & ARC-E & WinoGrande & Mean \\
\midrule
FP & Unquantized & 75.68 & 70.47 & 60.45 & 77.74 & 45.56 & 67.76 & 67.72 & 66.48 \\
\midrule
\multirow[c]{4}{*}[-1.5pt]{W2} & GPTQ & 51.69 & 27.74 & 23.86 & 40.24 & 25.09 & 26.01 & 47.43 & 34.58 \\
 & \textcolor{MockBlue}{\quad + DRQ} & 50.60 & \textbf{29.11} & \textbf{24.83} & \textbf{40.43} & 25.00 & 25.34 & \textbf{49.88} & \textbf{35.03} \\
\addlinespace[3pt]
 & AWQ & 51.31 & 26.76 & 26.89 & 42.66 & 25.26 & 26.09 & 48.70 & 35.38 \\
 & \textcolor{MockBlue}{\quad + DRQ} & \textbf{51.36} & \textbf{28.44} & 26.89 & \textbf{60.67} & 24.74 & \textbf{26.30} & \textbf{50.28} & \textbf{38.38} \\
\midrule
\multirow[c]{4}{*}[-1.5pt]{W3} & GPTQ & 69.15 & 61.76 & 50.04 & 63.61 & 33.53 & 52.65 & 60.06 & 55.83 \\
 & \textcolor{MockBlue}{\quad + DRQ} & \textbf{71.33} & \textbf{65.25} & 49.25 & \textbf{73.33} & \textbf{38.57} & \textbf{59.60} & \textbf{64.33} & \textbf{60.24} \\
\addlinespace[3pt]
 & AWQ & 72.96 & 65.02 & 51.59 & 73.18 & 39.85 & 64.73 & 64.48 & 61.69 \\
 & \textcolor{MockBlue}{\quad + DRQ} & \textbf{73.23} & \textbf{65.05} & \textbf{52.37} & 72.72 & \textbf{41.89} & \textbf{66.67} & \textbf{65.82} & \textbf{62.54} \\
\midrule
\multirow[c]{4}{*}[-1.5pt]{W4} & GPTQ & 75.14 & 69.93 & 58.26 & 76.15 & 44.80 & 66.92 & 65.67 & 65.26 \\
 & \textcolor{MockBlue}{\quad + DRQ} & \textbf{75.57} & 69.89 & \textbf{59.22} & \textbf{76.42} & \textbf{45.39} & \textbf{69.15} & \textbf{66.06} & \textbf{65.96} \\
\addlinespace[3pt]
 & AWQ & 75.03 & 69.70 & 59.09 & 78.72 & 46.08 & 67.72 & 66.93 & 66.18 \\
 & \textcolor{MockBlue}{\quad + DRQ} & \textbf{75.57} & \textbf{69.85} & 59.03 & \textbf{79.14} & \textbf{46.16} & \textbf{69.19} & \textbf{67.96} & \textbf{66.70} \\
\bottomrule
\end{tabular*}
\endgroup
\end{table}

\begin{table}[H]
\centering
\begingroup
\definecolor{MockBlue}{HTML}{176B91}
\definecolor{MockInk}{HTML}{202B35}
\definecolor{MockMuted}{HTML}{5E6873}

\caption{\settingrevision{Llama-3.1-8B-Instruct accuracy (\%) on each downstream task.}}
\label{tab:llama31-8b-instruct-task-accuracy}
\fontsize{8.1}{10.1}\selectfont
\setlength{\tabcolsep}{2pt}
\renewcommand{\arraystretch}{1.18}
\noindent
\begin{tabular*}{\linewidth}{@{\extracolsep{\fill}}cl*{8}{c}@{}}
\toprule
Bits & Method & PIQA & HellaSwag & MMLU & BoolQ & ARC-C & ARC-E & WinoGrande & Mean \\
\midrule
FP & Unquantized & 81.12 & 79.25 & 68.05 & 83.94 & 55.03 & 79.59 & 73.80 & 74.40 \\
\midrule
\multirow[c]{4}{*}[-1.5pt]{W2} & GPTQ & 49.95 & 27.67 & 24.11 & 43.03 & 26.37 & 27.53 & 49.80 & 35.49 \\
 & \textcolor{MockBlue}{\quad + DRQ} & \textbf{51.80} & \textbf{29.11} & \textbf{24.60} & \textbf{47.43} & 25.85 & 26.56 & \textbf{51.14} & \textbf{36.64} \\
\addlinespace[3pt]
 & AWQ & 51.03 & 26.60 & 26.89 & 61.93 & 24.40 & 25.00 & 48.78 & 37.80 \\
 & \textcolor{MockBlue}{\quad + DRQ} & 51.03 & \textbf{26.67} & 26.89 & 60.28 & \textbf{24.74} & \textbf{25.21} & \textbf{51.22} & \textbf{38.01} \\
\midrule
\multirow[c]{4}{*}[-1.5pt]{W3} & GPTQ & 76.93 & 75.40 & 57.01 & 81.53 & 45.48 & 71.25 & 71.59 & 68.46 \\
 & \textcolor{MockBlue}{\quad + DRQ} & \textbf{78.02} & 74.90 & \textbf{58.43} & 81.19 & \textbf{47.95} & \textbf{74.66} & 71.19 & \textbf{69.48} \\
\addlinespace[3pt]
 & AWQ & 78.13 & 74.53 & 59.54 & 81.28 & 49.91 & 73.65 & 72.22 & 69.90 \\
 & \textcolor{MockBlue}{\quad + DRQ} & \textbf{78.24} & 74.44 & \textbf{59.97} & \textbf{81.62} & 49.74 & \textbf{74.03} & \textbf{72.69} & \textbf{70.10} \\
\midrule
\multirow[c]{4}{*}[-1.5pt]{W4} & GPTQ & 79.98 & 78.39 & 66.39 & 82.81 & 52.82 & 77.65 & 73.72 & 73.11 \\
 & \textcolor{MockBlue}{\quad + DRQ} & \textbf{80.20} & \textbf{78.40} & \textbf{66.96} & \textbf{83.36} & \textbf{55.20} & \textbf{79.21} & \textbf{74.66} & \textbf{74.00} \\
\addlinespace[3pt]
 & AWQ & 80.90 & 78.68 & 65.71 & 83.30 & 53.67 & 79.38 & 72.77 & 73.49 \\
 & \textcolor{MockBlue}{\quad + DRQ} & 80.85 & 78.57 & \textbf{66.02} & \textbf{83.46} & 53.67 & \textbf{79.63} & \textbf{73.16} & \textbf{73.62} \\
\bottomrule
\end{tabular*}
\endgroup
\end{table}

\begin{table}[H]
\centering
\begingroup
\definecolor{MockBlue}{HTML}{176B91}
\definecolor{MockInk}{HTML}{202B35}
\definecolor{MockMuted}{HTML}{5E6873}

\caption{\settingrevision{Llama-3.1-70B-Instruct accuracy (\%) on each downstream task.}}
\label{tab:llama31-70b-instruct-task-accuracy}
\fontsize{8.1}{10.1}\selectfont
\setlength{\tabcolsep}{2pt}
\renewcommand{\arraystretch}{1.18}
\noindent
\begin{tabular*}{\linewidth}{@{\extracolsep{\fill}}cl*{8}{c}@{}}
\toprule
Bits & Method & PIQA & HellaSwag & MMLU & BoolQ & ARC-C & ARC-E & WinoGrande & Mean \\
\midrule
FP & Unquantized & 83.73 & 84.63 & 82.29 & 88.01 & 63.23 & 83.67 & 78.93 & 80.64 \\
\midrule
\multirow[c]{4}{*}[-1.5pt]{W2} & GPTQ & 63.71 & 60.43 & 29.01 & 58.50 & 28.67 & 40.61 & 56.67 & 48.23 \\
 & \textcolor{MockBlue}{\quad + DRQ} & \textbf{67.08} & \textbf{64.81} & \textbf{35.14} & \textbf{68.56} & \textbf{34.22} & \textbf{51.64} & \textbf{60.46} & \textbf{54.56} \\
\addlinespace[3pt]
 & AWQ & 51.52 & 26.66 & 24.65 & 62.17 & 26.71 & 24.75 & 49.49 & 37.99 \\
 & \textcolor{MockBlue}{\quad + DRQ} & 51.31 & 26.61 & 24.65 & 62.17 & 26.54 & \textbf{25.04} & \textbf{49.72} & \textbf{38.01} \\
\midrule
\multirow[c]{4}{*}[-1.5pt]{W3} & GPTQ & 82.70 & 82.90 & 79.86 & 86.57 & 59.30 & 81.78 & 77.58 & 78.67 \\
 & \textcolor{MockBlue}{\quad + DRQ} & 81.50 & \textbf{82.99} & 79.85 & 86.54 & \textbf{61.60} & \textbf{83.08} & \textbf{77.66} & \textbf{79.03} \\
\addlinespace[3pt]
 & AWQ & 82.32 & 82.70 & 79.84 & 87.09 & 60.58 & 82.87 & 77.74 & 79.02 \\
 & \textcolor{MockBlue}{\quad + DRQ} & \textbf{82.86} & 82.63 & 79.60 & \textbf{87.28} & \textbf{61.43} & 82.79 & \textbf{77.98} & \textbf{79.23} \\
\midrule
\multirow[c]{4}{*}[-1.5pt]{W4} & GPTQ & 83.57 & 84.21 & 81.56 & 87.89 & 62.80 & 83.08 & 78.45 & 80.22 \\
 & \textcolor{MockBlue}{\quad + DRQ} & \textbf{83.73} & \textbf{84.39} & \textbf{81.80} & 87.74 & 62.80 & \textbf{83.46} & \textbf{79.95} & \textbf{80.55} \\
\addlinespace[3pt]
 & AWQ & 83.24 & 84.26 & 81.64 & 87.55 & 63.31 & 83.67 & 79.01 & 80.38 \\
 & \textcolor{MockBlue}{\quad + DRQ} & \textbf{83.79} & \textbf{84.29} & 81.43 & \textbf{87.71} & 62.97 & \textbf{83.92} & \textbf{79.56} & \textbf{80.52} \\
\bottomrule
\end{tabular*}
\endgroup
\end{table}

\FloatBarrier

%% file: tex/appendix/D_05_ptq_methods.tex
\subsection{Results across PTQ methods}
\label{app:cross-host-results}

\settingrevision{\Cref{tab:cross-host-task-accuracy} reports WT2 and C4 perplexity and accuracy on each of the seven downstream tasks for the initial and DRQ-refined Llama-3.2-3B-Instruct W3A16 models in \cref{fig:cross-host-radar}. The OmniQuant runs use learned weight clipping without learned equivalent transformations. Across the six selected comparisons, DRQ lowers both perplexities and raises average task accuracy. For GPTQ and AWQ, accuracy improves on six of the seven tasks. For RTN, gains on PIQA, BoolQ, and ARC-Easy outweigh declines on the other four tasks.}

\begin{table}[!htbp]
\centering
\input{tex/tables/task_accuracy.tex}
\end{table}
\FloatBarrier

\paragraph{QuaRot with GPTQ.}
\label{app:quarot-gptq}
We further evaluate DRQ on Llama-3.1-8B-Instruct with QuaRot's Hadamard rotations followed by GPTQ \citep{Ashkboos2024quarot,frantar2023gptq}. DRQ refines the integer codes of the quantized weights in the rotated representation while retaining the rotations and quantization parameters. We test W2A16 and W3A16 with a group size of 128. Each calibration set contains 128 WT2 sequences of 2,048 tokens. The reconstruction tolerances on the search and checking subsets are both 0.01. We use $(\alpha,\beta)=(1,0.005)$ for W2 and $(0.1,0.005)$ for W3. As is shown in \Cref{tab:quarot-gptq-best}, at W3, DRQ improves accuracy on all seven tasks, raising mean accuracy from 68.44\% to 69.72\% while lowering WT2 and C4 PPL. This shows that DRQ can further improve a model quantized with QuaRot and GPTQ.

\begin{table}[!htbp]
\centering
\input{tex/tables/best_results.tex}
\end{table}
\FloatBarrier

%% file: tex/tables/task_accuracy.tex
\begingroup
\definecolor{ResultBlue}{HTML}{176B91}
\definecolor{ResultInk}{HTML}{202B35}

\caption{Perplexity and seven-task accuracy (\%) for Llama-3.2-3B W3.}
\label{tab:cross-host-task-accuracy}
\fontsize{7.8}{9.8}\selectfont
\setlength{\tabcolsep}{1.5pt}
\renewcommand{\arraystretch}{1.18}
\begin{tabular*}{\linewidth}{@{\extracolsep{\fill}}l*{10}{c}@{}}
\toprule
& \multicolumn{2}{c}{Perplexity} & \multicolumn{8}{c}{Accuracy (\%)} \\
\cmidrule(lr){2-3}\cmidrule(lr){4-11}
Method & WT2 & C4 & PIQA & HellaSwag & MMLU & BoolQ & ARC-C & ARC-E & WinoGrande & Mean \\
\midrule
RTN & 17.97 & 26.96 & 69.42 & 62.41 & 45.21 & 72.39 & 37.12 & 56.78 & 62.51 & 57.98 \\
\textcolor{ResultBlue}{\quad + DRQ} & \textbf{17.60} & \textbf{26.12} & \textbf{70.18} & 62.03 & 45.20 & \textbf{73.58} & 36.95 & \textbf{57.95} & 62.04 & \textbf{58.28} \\
\addlinespace[3pt]
GPTQ & 15.74 & 30.34 & 69.15 & 61.76 & 50.04 & 63.61 & 33.53 & 52.65 & 60.06 & 55.83 \\
\textcolor{ResultBlue}{\quad + DRQ} & \textbf{14.63} & \textbf{22.11} & \textbf{71.33} & \textbf{65.25} & 49.25 & \textbf{73.33} & \textbf{38.57} & \textbf{59.60} & \textbf{64.33} & \textbf{60.24} \\
\addlinespace[3pt]
AWQ & 15.19 & 20.80 & 72.96 & 65.02 & 51.59 & 73.18 & 39.85 & 64.73 & 64.48 & 61.69 \\
\textcolor{ResultBlue}{\quad + DRQ} & \textbf{14.90} & \textbf{20.52} & \textbf{73.23} & \textbf{65.05} & \textbf{52.37} & 72.72 & \textbf{41.89} & \textbf{66.67} & \textbf{65.82} & \textbf{62.54} \\
\addlinespace[3pt]
OmniQuant & 14.52 & 21.23 & 71.60 & 63.99 & 50.28 & 73.94 & 37.20 & 62.21 & 61.17 & 60.06 \\
\textcolor{ResultBlue}{\quad + DRQ} & \textbf{14.44} & \textbf{21.12} & \textbf{71.87} & \textbf{64.01} & \textbf{50.41} & \textbf{74.22} & 37.12 & \textbf{62.84} & 60.93 & \textbf{60.20} \\
\addlinespace[3pt]
GPTAQ & 14.94 & 20.97 & 73.78 & 64.57 & 46.80 & 74.10 & 38.48 & 62.75 & 65.04 & 60.79 \\
\textcolor{ResultBlue}{\quad + DRQ} & \textbf{14.44} & \textbf{20.72} & \textbf{74.37} & 64.49 & \textbf{47.08} & 73.70 & \textbf{38.74} & \textbf{62.88} & 65.04 & \textbf{60.90} \\
\addlinespace[3pt]
ParoQuant & 13.56 & 20.31 & 74.10 & 67.24 & 54.76 & 74.50 & 40.61 & 64.23 & 65.35 & 62.97 \\
\textcolor{ResultBlue}{\quad + DRQ} & \textbf{13.38} & \textbf{19.94} & \textbf{75.03} & \textbf{67.78} & 54.54 & \textbf{75.38} & 40.61 & \textbf{64.73} & \textbf{66.06} & \textbf{63.45} \\
\bottomrule
\end{tabular*}
\endgroup

%% file: tex/tables/best_results.tex
\begingroup

\caption{QuaRot + GPTQ with DRQ on Llama-3.1-8B-Instruct.}
\label{tab:quarot-gptq-best}
\fontsize{7.3}{9.3}\selectfont
\setlength{\tabcolsep}{1.3pt}
\renewcommand{\arraystretch}{1.18}
\begin{tabular*}{\linewidth}{@{\extracolsep{\fill}}ll*{10}{c}@{}}
\toprule
& & \multicolumn{2}{c}{Perplexity} & \multicolumn{8}{c}{Accuracy (\%)} \\
\cmidrule(lr){3-4}\cmidrule(lr){5-12}
Bits & Method & WT2 & C4 & PIQA & HellaSwag & MMLU & BoolQ & ARC-C & ARC-E & WinoGrande & Mean \\
\midrule
\multirow{2}{*}{W2} & QuaRot + GPTQ & 72.62 & 164.40 & 53.05 & 30.91 & 24.11 & 44.59 & 22.95 & 29.46 & 49.57 & 36.38 \\
& \textcolor{mockblue}{\quad + DRQ} & \textbf{67.53} & \textbf{159.03} & \textbf{53.32} & \textbf{31.06} & 23.33 & \textbf{46.67} & \textbf{24.23} & \textbf{29.76} & 49.41 & \textbf{36.82} \\
\addlinespace[3pt]
\multirow{2}{*}{W3} & QuaRot + GPTQ & 8.58 & 14.86 & 78.18 & 74.07 & 58.60 & 81.44 & 46.67 & 69.95 & 70.17 & 68.44 \\
& \textcolor{mockblue}{\quad + DRQ} & \textbf{8.24} & \textbf{14.23} & \textbf{78.35} & \textbf{75.41} & \textbf{60.18} & \textbf{82.63} & \textbf{48.81} & \textbf{71.25} & \textbf{71.43} & \textbf{69.72} \\
\bottomrule
\end{tabular*}
\endgroup

%% file: tex/appendix/D_06_moe_experts.tex
\subsection{Calibration tokens and expert refinement}
\label{app:moe-experts}

We examine how the number of calibration tokens reaching each expert affects DRQ refinement in the Qwen3-30B-A3B run reported in \cref{tab:mock-moe}. We quantize 18,432 expert linear layers and 192 attention linear layers across 48 Transformer blocks, leaving the routers unquantized. Each expert contains three linear layers.

\paragraph{Routing distributes calibration tokens unevenly.}
\settingrevision{\Cref{fig:moe-expert-coverage}(a) shows that 4,794 of 18,624 linear layers (25.74\%) receive fewer than the 1,024 input activation vectors used for the full search and checking subsets. These include 399 linear layers with one or two routed tokens and 201 with none, all belonging to experts. The router sends different numbers of calibration tokens to different experts. Some experts therefore receive too few tokens to form separate checking subsets.}

\begin{figure}[!htbp]
\centering
\setlength{\parskip}{0pt}
\setlength{\abovecaptionskip}{-8pt}
\includegraphics[width=\linewidth]{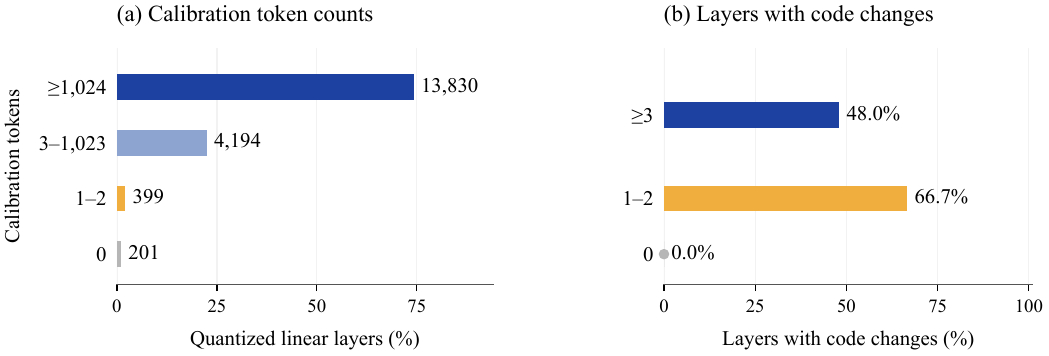}
\caption{Calibration token counts and integer-code changes in Qwen3-30B-A3B W3A16. (a) The 18,624 quantized linear layers grouped by the number of calibration tokens received, with layer counts shown beside the bars. (b) The fraction of linear layers with at least one changed integer code in each token-count group. The 0--2-token groups contain expert linear layers exclusively, while the $\geq3$ group includes both expert and attention linear layers.}
\label{fig:moe-expert-coverage}
\end{figure}

\paragraph{Refinement uses the available tokens.}
\settingrevision{DRQ adjusts its reconstruction checks to the number of available tokens. With at least three tokens, it forms three disjoint subsets, one for proposing and scoring integer-code changes and two for checking reconstruction loss. It reduces their sizes when needed. With one or two tokens, all input activation vectors are used for search. Linear layers receiving no tokens use RTN initialization and set $\bH$ and $\bC$ to the identity matrix. In this case, the reconstruction term is the squared Frobenius distance between the quantized and full-precision weights. Layers with fewer than three tokens have no separate checking subsets. The reconstruction-loss limits on both checking subsets therefore apply to layers with at least three tokens. This procedure uses the activations routed by the model without adding artificial tokens or changing which experts the router selects.}

In \cref{fig:moe-expert-coverage}(b), DRQ modifies 266 of the 399 linear layers receiving one or two tokens (66.7\%), compared with 8,657 of 18,024 linear layers in the $\geq3$ group (48.0\%). \settingrevision{Layers receiving one or two tokens are modified more often} but contribute only 199,289 of the 5,425,802 integer-code changes (3.67\%). All 201 zero-token linear layers retain their initial integer codes. Overall, DRQ changes 0.01815\% of the weights included in these counts, \settingrevision{compared with the codes before each layer is refined}, while retaining scales and zero points. These changes affect many linear layers but a small fraction of their weights, with most code changes occurring in the $\geq3$ group.

\FloatBarrier

%% file: tex/appendix/D_07_parameter_sensitivity.tex
\FloatBarrier
\Needspace{42\baselineskip}
\subsection{Parameter sensitivity}
\label{app:parameter-sensitivity}

\Cref{fig:parameter-sensitivity} evaluates the roles of $\alpha$ and $\beta$ on Llama-3.2-3B-Instruct W3. In \cref{eq:marq-loss}, $\alpha$ weights the worst-case directional error, while $\beta$ sets the weight on the identity matrix in $\bC$. We test all combinations of $\alpha\in\{0.1,0.3,0.5,1\}$ and $\beta\in\{0.005,0.1,0.25,0.75,1\}$. Among the displayed runs, 17 of 20 GPTQ configurations and 11 of 20 AWQ configurations improve average task accuracy and both WT2 and C4 PPL relative to the initial quantized model. Improvements on all three metrics occur across more GPTQ configurations, while the favorable AWQ settings concentrate around $\alpha=0.3$ and $0.5$.

At $\alpha=0.3$, both methods improve all three metrics across all five $\beta$ values. Increasing $\alpha$ to 1 worsens WT2 PPL in three GPTQ and two AWQ configurations. A stronger penalty on worst-case directional error therefore does not consistently improve model performance in this sweep. The objective balances this penalty against reconstruction loss on calibration data, and the parameter settings that improve model performance depend on the base PTQ method.

\begin{figure}[!htbp]
\centering
\setlength{\parskip}{0pt}
\setlength{\abovecaptionskip}{-10pt}
\includegraphics[width=\linewidth]{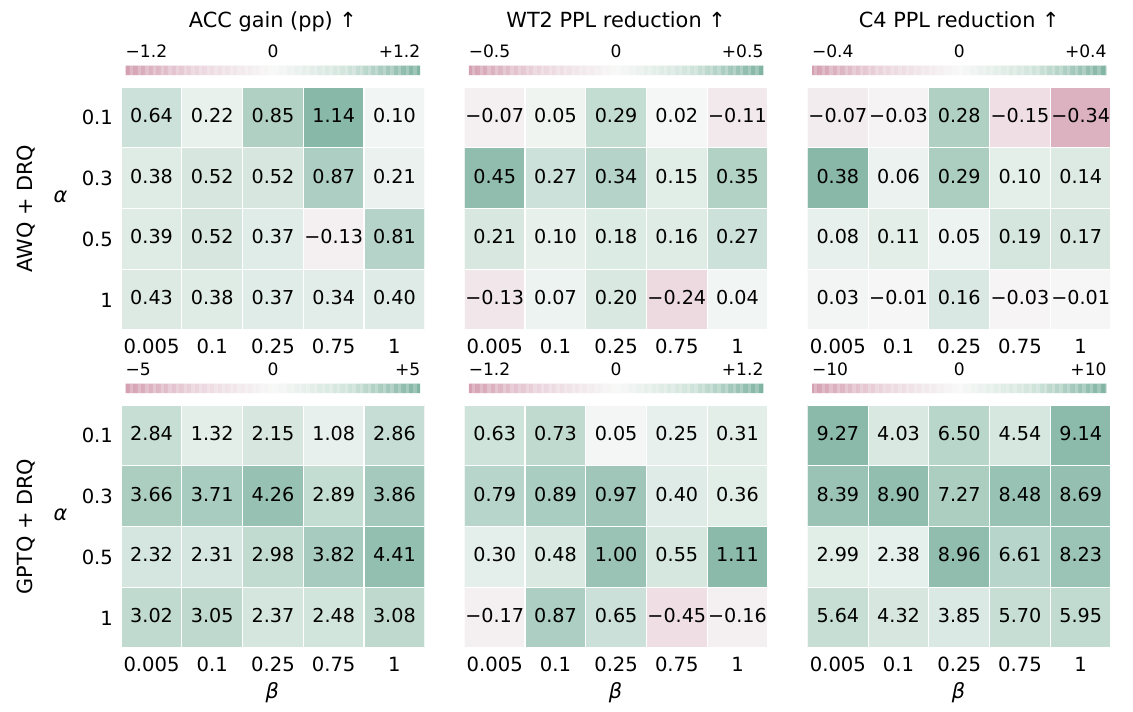}
\caption{Performance across different $\alpha$ and $\beta$ combinations on Llama-3.2-3B-Instruct at W3.}
\label{fig:parameter-sensitivity}
\end{figure}

\clearpage

%% file: tex/appendix/E_00_efficiency.tex
\section{Computational Efficiency}
\label{app:efficiency}
\label{app:calibration-details}

\input{tex/appendix/E_01_quantization_cost}
\input{tex/appendix/E_02_scaling}
\input{tex/appendix/E_03_search_complexity}
\input{tex/appendix/E_04_inference_overhead}

%% file: tex/appendix/E_01_quantization_cost.tex
\subsection{Refinement time across PTQ methods}
\label{app:cross-host-cost}

\settingrevision{\Cref{tab:cross-host-cost} compares DRQ refinement time for the settings in Appendix~\ref{app:cross-host-results}. The layer-refinement time sums 196 DRQ calls, one per linear layer. The time including integration also covers the surrounding code that calls DRQ within the base PTQ method or passes stored calibration activations to it. Both columns exclude base-method quantization and hyperparameter search. RTN, GPTQ, AWQ, OmniQuant, and GPTAQ were remeasured on the same A800 80GB GPU. ParoQuant uses its earlier timing measurement.}

\begin{table}[!htbp]
\centering
\input{tex/tables/cost.tex}
\end{table}

\FloatBarrier

%% file: tex/tables/cost.tex
\begingroup
\definecolor{ResultBlue}{HTML}{176B91}
\definecolor{ResultInk}{HTML}{202B35}

\caption{\langrevision{DRQ refinement time for Llama-3.2-3B W3.}}
\label{tab:cross-host-cost}
\fontsize{9.5}{11.5}\selectfont
\setlength{\tabcolsep}{3pt}
\renewcommand{\arraystretch}{1.18}
\begin{tabular*}{\linewidth}{@{\extracolsep{\fill}}lcc@{}}
\toprule
\langrevision{Base PTQ method} & \langrevision{Layer refinement (s)} & \langrevision{Including integration (s)} \\
\midrule
RTN & 556.74 & 788.04 \\
GPTQ & 227.28 & 233.24 \\
AWQ & 482.56 & 488.99 \\
OmniQuant & 520.38 & 792.13 \\
GPTAQ & 296.92 & 967.00 \\
ParoQuant (earlier run) & 220.79 & 794.04 \\
\bottomrule
\end{tabular*}
\endgroup

%% file: tex/appendix/E_02_scaling.tex
\FloatBarrier
\subsection{Scaling with model size and bit width}
\label{app:efficiency-scaling}

\Cref{fig:marq-scaling} compares DRQ refinement time at W2--W4 across 1B--70B Llama models. We sum the time spent refining linear layers. Each 1B--8B point averages the runs for that setting, with 86 runs in total at $(\alpha,\beta)=(0.3,0.25)$. The 1B--8B models use one A800 80GB GPU, and the 70B model uses two. At W3, refinement takes 2.24, 4.05, 5.95, and 28.05 minutes with GPTQ for 1B, 3B, 8B, and 70B, respectively. The corresponding AWQ times are 2.00, 3.90, 5.40, and 28.96 minutes. The two 70B W3 runs therefore spend similar time on DRQ. At 70B, W2/W4 take 36.55/26.85 minutes with GPTQ and 76.03/28.20 minutes with AWQ. The AWQ W2 and W4 runs use $(\alpha,\beta)=(1,1)$ and $(0.3,1)$, respectively. For every 1B--70B model and base PTQ method, W2 takes longer than W3 and W4, including when comparing runs with the same seeds. Refinement time depends on the code replacements tested and the number of search and checking steps performed.

\begin{figure}[!htbp]
\centering
\setlength{\parskip}{0pt}
\setlength{\abovecaptionskip}{-8pt}
\includegraphics[width=\linewidth]{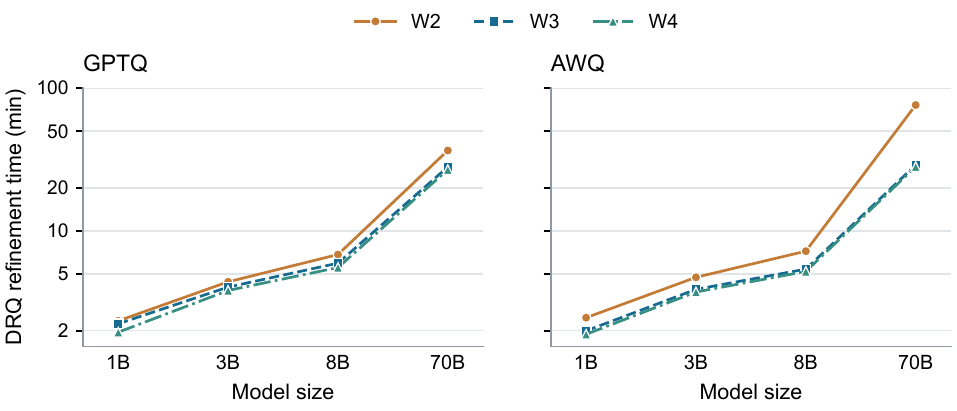}
\caption{DRQ refinement time across model sizes and weight bit widths.}
\label{fig:marq-scaling}
\end{figure}

\FloatBarrier

%% file: tex/appendix/E_03_search_complexity.tex
\FloatBarrier
\Needspace{5\baselineskip}
\subsection{Search complexity and deployment}
\label{app:offline-cost}
\label{app:complexity}

DRQ limits search cost by retaining a small set of integer-code changes and testing selected combinations. Consider a weight matrix with $m$ rows, one per output channel, and $d$ columns, one per input channel. With $n_{\rm d}$ calibration input vectors in the search subset and $B=|\mathcal Z_b|$ available integer codes, scoring all single-code changes costs $O(mdn_{\rm d}+mdB)$ per round. Only the $k=8$ shortlisted changes in each output row enter pair scoring, giving at most $\binom{k}{2}=28$ candidate pairs per row. Computing their activation inner products costs $O(mk^2n_{\rm d})$. This shortlist limits the cost of scoring pairs even for wide linear layers while accounting for interactions that single-code scores miss.

DRQ retains at most one update per output row, replacing one or two integer codes. It ranks these row updates by the decrease in the objective with the input direction $u$ held fixed in $O(m\log m)$ time and checks $p\le6$ distinct nonzero prefixes. Each prefix applies the first part of this ranked list to the current weights. Increasing its length changes more rows. Direct checks of reconstruction loss between quantized and full-precision layer outputs cost $O(pmd(n_{\rm d}+n_{\rm c}))$, where $n_{\rm c}$ counts all input vectors in the calibration subsets reserved for checking reconstruction loss. Wider layers and larger activation subsets make each check more expensive, but the prefix limit bounds how many checks are needed.

We also bound the number of numerical solves used to estimate the worst-case input direction. Let $T_{\rm eig}$ denote the time for one complete numerical solve, including matrix products, linear solves involving $\bC$, Krylov basis operations, and Rayleigh--Ritz extraction. Evaluating the initial quantized weights, the current weights at each round, and candidate prefixes costs at most $[1+\sum_{r=1}^{R}(1+p_{{\rm e},r})]T_{\rm eig}$ over $R\le3$ rounds, where $p_{{\rm e},r}\le p$ counts candidate solves in round $r$. Even if all six prefixes reach this stage in every round, at most 22 solves are needed per linear layer. Early rejection and stopping reduce this number. Initial reconstruction checks and the optional final comparison with the initial quantized weights in \cref{eq:exact-robust-excess} add one-time costs.

All of this computation takes place during offline refinement. DRQ changes the integer codes representing quantized weights while preserving the bit width, quantization grid, dequantization rule, and parameter layout. It therefore adds no inference operators or additional stored quantization parameters. Appendix~\ref{app:inference-overhead} measures the resulting latency, throughput, and memory use.

%% file: tex/appendix/E_04_inference_overhead.tex
\subsection{Inference overhead}
\label{app:inference-overhead}

We benchmark Llama-3.1-8B-Instruct at W4A16 with a group size of 128 on one NVIDIA A800 80GB GPU, using vLLM 0.6.3 in eager mode. Batch sizes are $\{1,4,8\}$, input lengths are $\{128,2048\}$, and output length is 128 tokens. We evaluate each initial quantized model and its DRQ-refined counterpart on the same workload and physical GPU, with 10 warm-up runs and 30 timed repeats per round. We time GPU-synchronized engine steps, including scheduling and sampling. Prefill latency covers the first step that processes the input sequence, while decoding latency averages the remaining 127 token-generation steps. Throughput accounts for both prefill and decoding. For latency and throughput, we take the median within each round and average these medians over four retained pairs of benchmark rounds for GPTQ and AWQ. Peak memory is the maximum PyTorch allocated memory across retained rounds.

\Cref{tab:inference-overhead-workloads} reports all six workloads. Across these workloads, prefill and decoding times change by $-0.80\%$ to $+0.82\%$ after refinement, computed from the geometric mean of paired latency ratios across rounds. Model tensors occupy the same amount of memory. DRQ changes the integer codes representing weights while preserving tensor layouts and inference operators, which explains the unchanged memory footprint under matched workloads. These results demonstrate that DRQ adds no extra inference computation or memory overhead.

\begin{table}[!htbp]
\centering
\input{tex/tables/inference_overhead_workloads}
\end{table}

\FloatBarrier

%% file: tex/tables/inference_overhead_workloads.tex
\begingroup
\definecolor{InferenceMARQBlue}{HTML}{176B91}
\small
\setlength{\tabcolsep}{2.5pt}
\renewcommand{\arraystretch}{1.10}
\caption{Inference cost across batch sizes and input lengths on Llama-3.1-8B-Instruct, W4A16. Output length is fixed at 128 tokens.}
\label{tab:inference-overhead-workloads}
\begin{tabular*}{\linewidth}{@{\extracolsep{\fill}}lcccccc@{}}
\toprule
Method & Batch size & \shortstack{Input\\(tokens)} & \shortstack{Prefill\\(ms)} & \shortstack{Decoding\\(ms/token)} & \shortstack{Throughput\\(tokens/s)} & \shortstack{Peak memory\\(GiB)} \\
\midrule
GPTQ & \multirow{2}{*}{1} & \multirow{2}{*}{128} & 33.61 & 16.52 & 60.04 & 7.86 \\
\textcolor{InferenceMARQBlue}{\quad + DRQ} &  &  & 33.49 & 16.49 & 60.16 & 7.86 \\
\addlinespace[2pt]
GPTQ & \multirow{2}{*}{1} & \multirow{2}{*}{2,048} & 156.58 & 16.49 & 56.92 & 8.01 \\
\textcolor{InferenceMARQBlue}{\quad + DRQ} &  &  & 157.86 & 16.59 & 56.58 & 8.01 \\
\addlinespace[2pt]
GPTQ & \multirow{2}{*}{4} & \multirow{2}{*}{128} & 49.07 & 17.04 & 231.35 & 7.89 \\
\textcolor{InferenceMARQBlue}{\quad + DRQ} &  &  & 49.05 & 16.90 & 233.17 & 7.89 \\
\addlinespace[2pt]
GPTQ & \multirow{2}{*}{4} & \multirow{2}{*}{2,048} & 520.06 & 17.06 & 190.56 & 8.65 \\
\textcolor{InferenceMARQBlue}{\quad + DRQ} &  &  & 520.16 & 17.15 & 189.75 & 8.65 \\
\addlinespace[2pt]
GPTQ & \multirow{2}{*}{8} & \multirow{2}{*}{128} & 77.81 & 17.24 & 451.75 & 7.93 \\
\textcolor{InferenceMARQBlue}{\quad + DRQ} &  &  & 77.80 & 17.17 & 453.57 & 7.93 \\
\addlinespace[2pt]
GPTQ & \multirow{2}{*}{8} & \multirow{2}{*}{2,048} & 1009.35 & 17.40 & 318.06 & 9.56 \\
\textcolor{InferenceMARQBlue}{\quad + DRQ} &  &  & 1009.24 & 17.41 & 318.04 & 9.56 \\
\midrule
AWQ & \multirow{2}{*}{1} & \multirow{2}{*}{128} & 36.58 & 18.25 & 54.38 & 7.70 \\
\textcolor{InferenceMARQBlue}{\quad + DRQ} &  &  & 36.56 & 18.15 & 54.67 & 7.70 \\
\addlinespace[2pt]
AWQ & \multirow{2}{*}{1} & \multirow{2}{*}{2,048} & 168.47 & 18.19 & 51.66 & 8.01 \\
\textcolor{InferenceMARQBlue}{\quad + DRQ} &  &  & 168.44 & 18.20 & 51.61 & 8.01 \\
\addlinespace[2pt]
AWQ & \multirow{2}{*}{4} & \multirow{2}{*}{128} & 71.86 & 18.72 & 209.06 & 7.89 \\
\textcolor{InferenceMARQBlue}{\quad + DRQ} &  &  & 71.88 & 18.74 & 208.87 & 7.89 \\
\addlinespace[2pt]
AWQ & \multirow{2}{*}{4} & \multirow{2}{*}{2,048} & 557.16 & 18.75 & 174.22 & 8.65 \\
\textcolor{InferenceMARQBlue}{\quad + DRQ} &  &  & 557.26 & 18.69 & 174.71 & 8.65 \\
\addlinespace[2pt]
AWQ & \multirow{2}{*}{8} & \multirow{2}{*}{128} & 103.41 & 18.92 & 408.67 & 7.93 \\
\textcolor{InferenceMARQBlue}{\quad + DRQ} &  &  & 103.44 & 18.84 & 410.28 & 7.93 \\
\addlinespace[2pt]
AWQ & \multirow{2}{*}{8} & \multirow{2}{*}{2,048} & 1074.83 & 19.02 & 293.35 & 9.55 \\
\textcolor{InferenceMARQBlue}{\quad + DRQ} &  &  & 1074.85 & 18.96 & 293.96 & 9.55 \\
\bottomrule
\end{tabular*}
\endgroup